\newcommand{\lr}[1]{\left(#1\right)}
\newcommand{\lrs}[1]{\left[#1\right]}
\newcommand{\lrc}[1]{\left\{#1\right\}}
\newcommand{\R}{\mathbb R}
\newcommand{\E}{\mathbb E}
\newcommand{\V}{{\textrm{Var}}}
\newcommand{\F}{{\cal F}}
\newcommand{\EL}{\E_{\rho_\lambda}\lrs{\hat L(h,S)}}
\newcommand{\ELL}{\E_{\rho_\lambda}\lrs{\hat L(h,S)^2}}
\newcommand{\VL}{{\textrm{Var}}_{\rho_\lambda}\lrs{\hat L(h,S)}}
\newcommand{\tail}{\ln \frac{2 \sqrt{n}}{\delta}}
\newcommand{\Epi}{\E_\pi\lrs{e^{-n\lambda \hat L(h,S)}}}
\newcommand{\EpiL}{\E_\pi\lrs{\hat L(h,S) e^{-n\lambda \hat L(h,S)}}}
\newcommand{\EpiLL}{\E_\pi\lrs{\hat L(h,S)^2 e^{-n\lambda \hat L(h,S)}}}
\newcommand{\IA}{{\cal I}_{0a}}
\newcommand{\IB}{{\cal I}_{ab}}
\newcommand{\IC}{{\cal I}_{b1}}
\newcommand{\Lval}{\hat L^{\text{\normalfont val}}}
\newcommand{\LCV}{L_{\text{\normalfont CV}}}
\DeclareMathOperator{\KL}{KL}
\DeclareMathOperator{\kl}{kl}
\DeclareMathOperator{\round}{round}
\renewcommand{\cite}{\citep}
\title[A Strongly Quasiconvex PAC-Bayesian Bound]{A Strongly Quasiconvex PAC-Bayesian Bound}
\author{\Name{Niklas Thiemann} \Email{niklasthiemann@gmail.com}\\
				\addr Department of Computer Science, University of Copenhagen
       \AND
       \Name{Christian Igel} \Email{igel@di.ku.dk}\\
       \addr Department of Computer Science, University of Copenhagen
				\AND
				\Name{Olivier Wintenberger} \Email{olivier.wintenberger@upmc.fr}\\
				\addr LSTA, Sorbonne Universit\'es, UPMC Universit\'e Paris 06
				\AND
				\Name{Yevgeny Seldin} \Email{seldin@di.ku.dk}\\
				\addr Department of Computer Science, University of Copenhagen
				}
\begin{document} 

\maketitle

\begin{abstract}

We propose a new PAC-Bayesian bound and a way of constructing a hypothesis space, so that the bound is convex in the posterior distribution and also convex in a trade-off parameter between empirical performance of the posterior distribution and its complexity. The complexity is measured by the Kullback-Leibler divergence to a prior. We derive an alternating procedure for minimizing the bound. We show that the bound can be rewritten as a one-dimensional function of the trade-off parameter and provide sufficient conditions under which the function has a single global minimum. When the conditions are satisfied the alternating minimization is guaranteed to converge to the global minimum of the bound. We provide experimental results demonstrating that rigorous minimization of the bound is competitive with cross-validation in tuning the trade-off between complexity and empirical performance. In all our experiments the trade-off turned to be quasiconvex even when the sufficient conditions were violated.
\end{abstract}

\section{Introduction}

PAC-Bayesian analysis, where PAC stands for the Probably Approximately Correct frequentist learning model \cite{Val84}, analyzes prediction accuracy of \emph{randomized classifiers}. A randomized classifier is a classifier defined by a distribution $\rho$ over a hypothesis class ${\cal H}$. A randomized classifier predicts by drawing a hypothesis from ${\cal H}$ according to $\rho$ and applying it to make the prediction \cite{McA98}. In many applications randomized prediction is replaced by a $\rho$-weighted majority vote \cite{GLLM09}. 

PAC-Bayesian analysis provides some of the tightest generalization bounds in statistical learning theory \cite{GLLM09}. PAC-Bayesian bounds have a form of a trade-off between empirical performance of $\rho$ and its complexity, measured by the Kullback-Leibler (KL) divergence (a.k.a.\ relative entropy) between $\rho$ and a prior distribution $\pi$. Most of PAC-Bayesian literature relies on cross-validation to tune the trade-off. Cross-validation is an extremely powerful and practical heuristic for selecting model parameters, but it can potentially be misleading \cite{KMNR97,KR99}. It is also computationally expensive, especially for computationally demanding models, such as kernel SVMs, since it requires training a large number of classifiers on almost the whole dataset. Derivation of theoretical results that would not require parameter cross-validation is a long-standing challenge for theoretical machine learning \cite{Lan05}.

The need to rely on cross-validation stems from several reasons:
\begin{itemize}
	\item Not all of the existing PAC-Bayesian bounds are convex in the posterior distribution $\rho$. For example, the most widely used PAC-Bayes-kl bound due to \citet{See02} is non-convex. This makes it hard to minimize the bound with respect to the posterior distribution. In most papers the bound is replaced by a linear trade-off between empirical error and the KL divergence and the trade-off parameter is tuned by cross-validation. 
	
	While it is possible to achieve convexity in the posterior distribution $\rho$ by introducing an additional trade-off parameter \cite{Cat07,KMH11}, we are unaware of successful attempts to tune the additional trade-off parameter through rigorous bound minimization. In practice, the alternative bounds are replaced by the same linear trade-off mentioned above and tuned by cross-validation.
	\item The second obstacle is that, in order to keep the KL divergence between the posterior and the prior tractable, the set of posterior and prior distributions is often restricted. A popular example are Gaussian posteriors and Gaussian priors \cite{LST02,McA03,Lan05}. Even if the initial bound is convex in the posterior distribution, the convexity may be broken by such a restriction or reparametrization, as it happens in the Gaussian case \cite{GLLM09}.
	\item Even though PAC-Bayesian bounds are some of the tightest, we are unaware of examples, where their tightness is sufficient to compete with cross-validation in tuning the trade-off between complexity and empirical performance.
\end{itemize}

We propose a relaxation of Seeger's PAC-Bayes-kl inequality, which we name \emph{PAC-Bayes-$\lambda$ inequality} or \emph{PAC-Bayes-$\lambda$ bound} when referring to the right hand side of the inequality. The bound is convex in the posterior distribution $\rho$ and has a convex trade-off between the empirical loss and KL divergence. The inequality is similar in spirit to the one proposed by \citet{KMH11}, but it does not restrict the form of $\rho$ and $\pi$. We provide an alternating procedure for minimizing the bound. We show that the bound can be rewritten as a continuous one-dimensional function of the trade-off parameter $\lambda$ and that under certain sufficient conditions this function is strongly quasiconvex (it has a single global minimum and no other stationary points). This guarantees convergence of alternating minimization to the global optimum.

For infinite hypothesis spaces alternating minimization can be computationally intractable or require parametrization, which can break the convexity of the bound in the posterior distribution. We get around this difficulty by constructing a finite data-dependent hypothesis space. The hypothesis space is constructed by taking $m$ subsamples of size $r$ from the training data. Each subsample is used to train a weak classifier, which is then validated on the remaining $n-r$ points, where $n$ is the sample size. We adapt our PAC-Bayesian bound and minimization procedure to this setting. Our analysis and minimization procedure work for any $m$, $r$, and any split of the data, including overlaps between training sets and overlaps between validation sets. In particular, it can also be applied to aggregate models originating from a cross-validation split of the data. However, in cross-validation the training sets are typically large (of order $n$) and validation sets and the number of models are small. While the prediction accuracy is still competitive in this setting, the highest computational advantage from our approach is achieved when the relation is reversed and the training size $r$ is taken to be small, roughly of order $d$, where $d$ is the number of features, and the number of models $m$ is taken to be large, roughly of order $n$. The construction of hypothesis space can be seen as sample compression \cite{LM07}. However, unlike the common approach to sample compression, which considers all possible subsamples of a given size and thus computationally and statistically inefficient, we consider only a small subset of possible subsamples.

We provide experimental results on several UCI datasets showing that the prediction accuracy of our learning procedure (training $m$ weak classifiers and weighting their predictions through minimization of the PAC-Bayes-$\lambda$ bound) is comparable to prediction accuracy of kernel SVMs tuned by cross-validation. In addition, we show that when $r$ is considerably smaller than $n$ and $m$ is of order $n$, the comparable prediction accuracy is achieved at a much lower computation cost. The computational speed-up is achieved because of the super-quadratic training time of kernel SVMs, which makes it much faster to train many weak SVMs on small training sets than one powerful SVM on a big training set. 

In the following, we provide a brief review of PAC-Bayesian analysis, then present the PAC-Bayesian bound and its minimization procedure in Section~\ref{sec:bound}, derive conditions for convergence of minimization procedure to the global minimum in Section~\ref{sec:quasi}, describe our construction of a hypothesis space and specialize our results to this construction in Section~\ref{sec:aggregation}, and provide experimental validation in Section~\ref{sec:experiments}.

\section{A Brief Review of PAC-Bayesian Analysis}

To set the scene we start with a brief review of PAC-Bayesian analysis. 

\subsubsection*{Notations} 

We consider a supervised learning setting with an input space ${\cal X}$ and an output space ${\cal Y}$. We let $S = \lrc{(X_1,Y_1),\dots,(X_n,Y_n)}$ denote an independent identically distributed (i.i.d.) sample of size $n$ drawn according to an unknown distribution ${\cal D}(X,Y)$. A hypothesis $h$ is a function from the input to the output space $h:{\cal X} \to {\cal Y}$. We use ${\cal H}$ to denote a hypothesis class. We let $\ell:{\cal Y}^2 \to [0,1]$ denote a bounded loss function. The loss of $h$ on a sample $(X,Y)$ is $\ell(h(X),Y)$ and the expected loss of $h$ is denoted by $L(h) = \E\lrs{\ell(h(X),Y)}$. We use $\hat L(h,S) = \frac{1}{n} \sum_{i=1}^n \ell(h(X_i),Y_i)$ to denote the empirical loss of $h$ on $S$. 

A randomized prediction rule parametrized by a distribution $\rho$ over ${\cal H}$ is 
defined in the following way. For each prediction on a sample point $X$ the rule 
draws a new hypothesis $h \in {\cal H}$ according to $\rho$ and applies it to $X$. The expected loss of such prediction rule 
is 
$
\E_{h \sim \rho}\lrs{L(h)}$ and the empirical loss is 
$
\E_{h\sim\rho}\lrs{\hat L(h,S)}$. We use $\KL(\rho\|\pi) = \E_{h\sim\rho}\lrs{\ln\frac{\rho(h)}{\pi(h)}}$ to denote the KL divergence between $\rho$ and $\pi$. For Bernoulli distributions with biases $p$ and $q$ we use $\kl(p\|q)$ as a shorthand for $\KL([p,1-p]\|[q,1-q])$, the KL divergence between the two distributions. Finally, we use $\E_\rho\lrs{\cdot}$ as a shorthand for $\E_{h\sim\rho}\lrs{\cdot}$ and $\E_S\lrs{\cdot}$ as a shorthand for $\E_{S\sim{\cal D}^n}\lrs{\cdot}$.

\subsubsection*{Change of Measure Inequality}

The majority of PAC-Bayesian bounds are based on the following lemma.

\begin{lemma}[Change of Measure Inequality] For any function $f: {\cal H} \times \lr{{\cal X} \times {\cal Y}}^n \to \R$ and for any distribution $\pi$ over ${\cal H}$ , such that $\pi$ is independent of $S$, with probability greater than $1-\delta$ over a random draw of $S$, for all distributions $\rho$ over ${\cal H}$ simultaneously:
\begin{equation}
\E_{h\sim\rho}\lrs{f(h,S)} \leq \KL(\rho\|\pi) + \ln \frac{1}{\delta} + \ln \E_{h\sim\pi}\lrs{\E_{S'}\lrs{e^{f(h,S')}}}.
\label{eq:com}
\end{equation}
\label{lem:com}
\end{lemma}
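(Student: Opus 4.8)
The plan is to prove a pointwise-in-$\rho$ inequality first and then control a single $S$-dependent quantity by Markov's inequality, so that the bound holds uniformly over all $\rho$ with the stated probability. The key observation is that for a fixed sample $S$, the left-hand side is a linear functional of $\rho$ and the right-hand side is (up to an additive constant) the convex conjugate structure that appears in the Donsker--Varadhan variational formula. Concretely, I would fix $S$ and define the quantity $\xi(S) = \E_{h\sim\pi}\lrs{e^{f(h,S)}}$, and recall (or prove in one line) the Donsker--Varadhan change of measure identity: for any $\rho$ absolutely continuous with respect to $\pi$,
\begin{equation}
\E_{h\sim\rho}\lrs{f(h,S)} \leq \KL(\rho\|\pi) + \ln \E_{h\sim\pi}\lrs{e^{f(h,S)}}.
\label{eq:dv}
\end{equation}
This follows because $\KL(\rho\|\pi) + \ln\xi(S) - \E_\rho\lrs{f(h,S)} = \KL\!\lr{\rho \,\big\|\, \tilde\pi}$, where $\tilde\pi$ is the Gibbs measure with density proportional to $e^{f(h,S)}\pi(h)$, and a KL divergence is nonnegative. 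Note that \eqref{eq:dv} already holds for \emph{all} $\rho$ simultaneously, for \emph{every} fixed $S$ — there is no failure probability yet.

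The second step is to bound $\ln\xi(S) = \ln \E_{h\sim\pi}\lrs{e^{f(h,S)}}$ by its "expected" counterpart $\ln \E_{h\sim\pi}\lrs{\E_{S'}\lrs{e^{f(h,S')}}}$. Here I would take the expectation over $S$ of $\xi(S)$: using that $\pi$ is independent of $S$, Fubini/Tonelli gives $\E_S\lrs{\xi(S)} = \E_{h\sim\pi}\lrs{\E_{S'}\lrs{e^{f(h,S')}}}$, which is exactly the quantity inside the logarithm on the right-hand side of \eqref{eq:com}. Since $\xi(S) \geq 0$, Markov's inequality yields, with probability at least $1-\delta$ over the draw of $S$,
\begin{equation}
\xi(S) \leq \frac{1}{\delta}\, \E_S\lrs{\xi(S)} = \frac{1}{\delta}\, \E_{h\sim\pi}\lrs{\E_{S'}\lrs{e^{f(h,S')}}},
\label{eq:markov}
\end{equation}
and taking logarithms gives $\ln\xi(S) \leq \ln\frac{1}{\delta} + \ln \E_{h\sim\pi}\lrs{\E_{S'}\lrs{e^{f(h,S')}}}$.

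The final step is simply to substitute \eqref{eq:markov} into \eqref{eq:dv}: on the event of probability at least $1-\delta$ where \eqref{eq:markov} holds, we get for all $\rho$ simultaneously
\[
\E_{h\sim\rho}\lrs{f(h,S)} \leq \KL(\rho\|\pi) + \ln\frac{1}{\delta} + \ln \E_{h\sim\pi}\lrs{\E_{S'}\lrs{e^{f(h,S')}}},
\]
which is \eqref{eq:com}. The reason the "for all $\rho$ simultaneously" clause comes for free is that the only $S$-dependent randomness we had to tame, namely $\xi(S)$, does not depend on $\rho$ at all; the $\rho$-dependence lives entirely in the deterministic inequality \eqref{eq:dv}.

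I do not anticipate a serious obstacle here — the proof is short. The one point that requires a little care is the Donsker--Varadhan step \eqref{eq:dv}: one should handle the case $\KL(\rho\|\pi) = \infty$ (where the inequality is trivial) separately from the absolutely-continuous case, and one should note that $\E_{h\sim\pi}\lrs{e^{f(h,S)}}$ could in principle be $+\infty$, in which case \eqref{eq:com} is again trivially true. Modulo these degenerate cases, the argument is exactly: nonnegativity of KL divergence for the tilted Gibbs measure, Markov's inequality, and Fubini's theorem to commute $\E_S$ past $\E_{h\sim\pi}$ using the independence of $\pi$ from $S$.
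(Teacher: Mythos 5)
Your proposal is correct and follows exactly the route the paper sketches: the Donsker--Varadhan change of measure inequality applied pointwise in $S$, Markov's inequality on $\E_\pi\lrs{e^{f(h,S)}}$, and Fubini via the independence of $\pi$ from $S$ to pull $\E_S$ inside. Your added care about the tilted Gibbs measure and the degenerate cases ($\KL = \infty$ or an infinite moment generating function) only makes the argument more complete than the paper's informal sketch.
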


The lemma is based on Donsker-Varadhan's variational definition of the KL divergence \cite{DV75}, by which $\KL(\rho\|\pi) = \sup_f \lrc{\E_\rho[f(h)] + \ln \E_\pi \lrs{e^{f(h)}}}$, where the supremum is over all measurable functions $f:{\cal H}\to\R$. In the lemma, $f$ is extended to be a function of $h$ and $S$ and then Markov's inequality is used to bound the expectation with respect to $\pi$ by $\E_\pi \lrs{e^{f(h,S)}} \leq \frac{1}{\delta} \E_{S'}\lrs{\E_\pi \lrs{e^{f(h,S')}}}$ with probability at least $1-\delta$. Independence of $\pi$ and $S$ allows to exchange the order of expectations, leading to the statement of the lemma. For a formal proof we refer to \citet{TS13}.

\subsubsection*{PAC-Bayes-kl Inequality}

Various choices of the function $f$ in Lemma~\ref{lem:com} lead to various forms of PAC-Bayesian bounds \cite{SLCB+12}. The classical choice is $f(h,S) = n \kl(\hat L(h,S)\|L(h))$. The moment generating function of $f$ can be bounded in this case by $\E_S\lrs{e^{f(h,S)}} \leq 2 \sqrt n$ \cite{Mau04,GLL+15}. This bound is used to control the last term in equation \eqref{eq:com}, leading to the PAC-Bayes-kl inequality \cite{See02}.

\begin{theorem}[PAC-Bayes-kl Inequality] For any probability distribution $\pi$ over ${\cal H}$ that is independent of $S$ and any $\delta \in (0,1)$, with probability greater than $1-\delta$ over a random draw of a sample $S$, for all distributions $\rho$ over ${\cal H}$ simultaneously:
\begin{equation}
\label{eq:PBkl}
\kl\lr{\E_\rho\lrs{\hat L(h,S)}\middle\|\E_\rho\lrs{L(h)}} \leq \frac{\KL(\rho\|\pi) + \ln \frac{2 \sqrt n}{\delta}}{n}.
\end{equation}
\label{thm:PBkl}
\end{theorem}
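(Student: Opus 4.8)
The plan is to instantiate the Change of Measure Inequality (Lemma~\ref{lem:com}) with the specific function $f(h,S) = n\kl\lr{\hat L(h,S)\middle\|L(h)}$. With this choice the left-hand side of \eqref{eq:com} becomes $n\,\E_\rho\lrs{\kl\lr{\hat L(h,S)\middle\|L(h)}}$, and the last term becomes $\ln\E_\pi\lrs{\E_{S'}\lrs{e^{n\kl(\hat L(h,S')\|L(h))}}}$. So with probability at least $1-\delta$, simultaneously for all $\rho$,
\begin{equation*}
n\,\E_\rho\lrs{\kl\lr{\hat L(h,S)\middle\|L(h)}} \leq \KL(\rho\|\pi) + \ln\frac{1}{\delta} + \ln\E_\pi\lrs{\E_{S'}\lrs{e^{n\kl(\hat L(h,S')\|L(h))}}}.
\end{equation*}

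Next I would control the moment generating function inside the last term. The key estimate, due to \citet{Mau04} (see also \citet{GLL+15}), is that for any fixed $h$, $\E_{S'}\lrs{e^{n\kl(\hat L(h,S')\|L(h))}} \leq 2\sqrt{n}$, using that $n\hat L(h,S')$ is a sum of $n$ i.i.d.\ $[0,1]$-valued random variables with mean $L(h)$ and a union-bound/combinatorial argument over the possible values of the empirical mean. Since this bound holds uniformly in $h$, taking the expectation over $h\sim\pi$ preserves it, so the last term is at most $\ln\lr{2\sqrt{n}}$. Substituting and combining the two logarithms gives
\begin{equation*}
n\,\E_\rho\lrs{\kl\lr{\hat L(h,S)\middle\|L(h)}} \leq \KL(\rho\|\pi) + \ln\frac{2\sqrt{n}}{\delta}.
\end{equation*}

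Finally I would divide by $n$ and invoke Jensen's inequality. The function $(p,q)\mapsto\kl(p\|q)$ is jointly convex, so $\kl\lr{\E_\rho\lrs{\hat L(h,S)}\middle\|\E_\rho\lrs{L(h)}} \leq \E_\rho\lrs{\kl\lr{\hat L(h,S)\middle\|L(h)}}$, which chains with the previous display to yield exactly \eqref{eq:PBkl}. The only genuinely nontrivial ingredient here is the moment generating function bound $\E_{S'}\lrs{e^{n\kl(\hat L(h,S')\|L(h))}} \leq 2\sqrt{n}$; everything else is a direct substitution into Lemma~\ref{lem:com} followed by an application of joint convexity of $\kl$. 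I would therefore expect the write-up to cite \citet{Mau04} for that estimate rather than reproving it, and to spend essentially no effort on the remaining steps.
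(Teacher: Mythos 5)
Your proposal is correct and follows exactly the route the paper sketches: instantiate Lemma~\ref{lem:com} with $f(h,S) = n\kl(\hat L(h,S)\|L(h))$, invoke the moment generating function bound $\E_{S'}\lrs{e^{n\kl(\hat L(h,S')\|L(h))}} \leq 2\sqrt{n}$ from \citet{Mau04}, and finish with Jensen's inequality via joint convexity of $\kl$. The paper only gives this as a two-sentence sketch before the theorem statement (deferring to \citet{See02} and \citet{Mau04}), and your write-up correctly supplies the one step it leaves implicit, namely the Jensen argument passing from $\E_\rho\lrs{\kl(\hat L(h,S)\|L(h))}$ to $\kl\lr{\E_\rho\lrs{\hat L(h,S)}\middle\|\E_\rho\lrs{L(h)}}$.
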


\section{PAC-Bayes-$\lambda$ inequality and its Alternating Minimization}
\label{sec:bound}

By inversion of the $\kl$ with respect to its second argument, inequality \eqref{eq:PBkl} provides a bound on $\E_\rho\lrs{L(h)}$. However, this bound is not convex in $\rho$ and, therefore, inconvenient for minimization. We introduce a relaxed form of the inequality, which has an additional trade-off parameter $\lambda$. The inequality leads to a bound, which is convex in $\rho$ for a fixed $\lambda$ and convex in $\lambda$ for a fixed $\rho$, making it amenable to alternating minimization. Theorem~\ref{thm:PBlambda} is analogous to \citet[Theorem 1]{KMH11} and a similar result can also be derived by using the techniques from \citet{TS13}, as shown by \citet{Thi16}.
\begin{theorem}[PAC-Bayes-$\lambda$ Inequality] For any probability distribution $\pi$ over ${\cal H}$ that is independent of $S$ and any $\delta \in (0,1)$, with probability greater than $1-\delta$ over a random draw of a sample $S$, for all distributions $\rho$ over ${\cal H}$ and $\lambda \in (0,2)$ simultaneously:
\begin{equation}
\label{eq:PBlambda}
\E_\rho\lrs{L(h)} \leq \frac{\E_\rho\lrs{\hat L(h,S)}}{1 - \frac{\lambda}{2}} + \frac{\KL(\rho\|\pi) + \ln \frac{2 \sqrt n}{\delta}}{\lambda\lr{1-\frac{\lambda}{2}}n}.
\end{equation}
\label{thm:PBlambda}
\end{theorem}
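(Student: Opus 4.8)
The plan is to obtain \eqref{eq:PBlambda} from the PAC-Bayes-kl inequality \eqref{eq:PBkl} by a purely deterministic post-processing, so that all the probabilistic content (and the ``for all $\rho$ simultaneously'' quantifier) is inherited directly from Theorem~\ref{thm:PBkl}. Write $p = \E_\rho\lrs{\hat L(h,S)}$, $q = \E_\rho\lrs{L(h)}$, and $B = \frac{\KL(\rho\|\pi) + \tail}{n}$. On the high-probability event of Theorem~\ref{thm:PBkl} we have $\kl(p\|q) \le B$ for all $\rho$ at once, and the remaining argument manipulates this scalar inequality with a free parameter $\lambda$ introduced afterwards; since $\lambda$ never enters the probabilistic step, the conclusion holds for all $\rho$ and all $\lambda \in (0,2)$ simultaneously on the same event.

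First I would establish the refined Pinsker-type lower bound $\kl(p\|q) \ge \frac{(q-p)^2}{2q}$ for $0 \le p \le q < 1$ (the case $q \ge 1$ being vacuous since then $\kl(p\|q) = \infty$, and the case $q < p$ being irrelevant because the quantity $q-p$ we shall bound is then nonpositive). To prove it, fix $p$ and consider $g(q) = \kl(p\|q) - \frac{(q-p)^2}{2q}$. One checks $g(p) = 0$ and, differentiating, $g'(q) = \frac{(q-p)\lr{q - p + q^2 + pq}}{2q^2(1-q)}$, which is nonnegative on $[p,1)$ because $q - p \ge 0$ there and $q - p + q^2 + pq = q(1+q+p) - p$ is increasing in $q$ with value $2p^2 \ge 0$ at $q = p$. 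Hence $g(q) \ge 0$ on $[p,1)$.

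Combining $\kl(p\|q) \le B$ with this lower bound gives $(q-p)^2 \le 2qB$, hence $q - p \le \sqrt{2qB}$. Then for any $\lambda > 0$ I would apply the AM-GM inequality $\sqrt{ab} \le \tfrac12\lr{\lambda a + b/\lambda}$ with $a = q$ and $b = 2B$ to get $q - p \le \frac{\lambda q}{2} + \frac{B}{\lambda}$, i.e. $q\lr{1 - \frac{\lambda}{2}} \le p + \frac{B}{\lambda}$. For $\lambda \in (0,2)$ the factor $1 - \frac{\lambda}{2}$ is strictly positive, so dividing through yields $q \le \frac{p}{1-\lambda/2} + \frac{B}{\lambda\lr{1-\lambda/2}}$, which is exactly \eqref{eq:PBlambda} after substituting back the definitions of $p$, $q$, and $B$. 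The only nontrivial ingredient is the refined Pinsker bound $\kl(p\|q) \ge (q-p)^2/(2q)$: the classical bound $\kl(p\|q) \ge 2(p-q)^2$ will not do, because it lacks the $q$ in the denominator that, after taking the square root and splitting by AM-GM, is precisely what produces the $\frac{\lambda}{2}\E_\rho\lrs{L(h)}$ term and hence the $\frac{1}{1-\lambda/2}$ factor. So the main care goes into getting that sharper lower bound right; the rest is elementary algebra.
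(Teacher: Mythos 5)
Your proposal is correct and follows essentially the same route as the paper: relax PAC-Bayes-kl via the refined Pinsker inequality $\kl(p\|q)\ge (q-p)^2/(2q)$, apply $\sqrt{xy}\le\frac12(\lambda x + y/\lambda)$, and divide by $1-\lambda/2$, with the simultaneity over $\lambda$ correctly attributed to the fact that $\lambda$ enters only deterministically. The only difference is that you supply a (correct) derivative-based proof of the refined Pinsker bound, which the paper simply cites.
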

We emphasize that the theorem holds for \emph{all} values of $\lambda \in (0,2)$ simultaneously. This is in contrast to some other parametrized PAC-Bayesian bounds, for example, the one proposed by \citet{Cat07}, which hold for a \emph{fixed} value of a trade-off parameter.
\begin{proof}
We use the following analog of Pinsker's inequality \citep[Lemma 8.4]{Mar96,Mar97,Sam00,BLM13}: for $p < q$
\begin{equation}
\label{eq:kl}
\kl(p\|q) \geq (q-p)^2/(2q).
\end{equation}
By application of inequality \eqref{eq:kl}, inequality \eqref{eq:PBkl} can be relaxed to
\begin{equation}
\E_\rho\lrs{L(h)} - \E_\rho\lrs{\hat L(h,S)} \leq \sqrt{2 \E_\rho\lrs{L(h)} \frac{\KL(\rho\|\pi) + \ln \frac{2 \sqrt n}{\delta}}{n}}
\label{eq:PBsqrt}
\end{equation}
\cite{McA03}. By using the inequality $\sqrt{xy} \leq \frac{1}{2}\lr{\lambda x + \frac{y}{\lambda}}$ for all $\lambda > 0$, we have that with probability at least $1-\delta$ for all $\rho$ and $\lambda > 0$
\begin{equation}
\E_\rho\lrs{L(h)} - \E_\rho\lrs{\hat L(h,S)} \leq \frac{\lambda}{2}\E_\rho\lrs{L(h)} + \frac{\KL(\rho\|\pi) + \ln \frac{2 \sqrt n}{\delta}}{\lambda n}
\end{equation}
\cite{KMH11}. By changing sides
\[
\lr{1 - \frac{\lambda}{2}} \E_\rho\lrs{L(h)} \leq \E_\rho\lrs{\hat L(h,S)} + \frac{\KL(\rho\|\pi) + \ln \frac{2 \sqrt n}{\delta}}{\lambda n}.
\]
For $\lambda < 2$ we can divide both sides by $\lr{1-\frac{\lambda}{2}}$ and obtain the theorem statement.
\end{proof}

Since $\E_\rho\lrs{\hat L(h,S)}$ is linear in $\rho$ and $\KL(\rho\|\pi)$ is convex in $\rho$ \cite{CT06}, for a fixed $\lambda$ the right hand side of inequality \eqref{eq:PBlambda} is convex in $\rho$ and the minimum is achieved by 
\begin{equation}
\label{eq:rho}
\rho_\lambda(h) = \frac{\pi(h) e^{-\lambda n \hat L(h,S)}}{\E_\pi\lrs{e^{-\lambda n \hat L(h',S)}}},
\end{equation}
where $\E_\pi\lrs{e^{-\lambda n \hat L(h',S)}}$, a shorthand for $\E_{h'\sim\pi}\lrs{e^{-\lambda n \hat L(h',S)}}$, is a convenient way of writing the normalization factor, which covers continuous and discrete hypothesis spaces in a unified notation. Furthermore, for $t \in (0,1)$ and $c_1,c_2 \geq 0$ the function $\frac{c_1}{1-t} + \frac{c_2}{t(1-t)}$ is convex in $t$ \cite{TS13}.  Therefore, for a fixed $\rho$ the right hand side of inequality \eqref{eq:PBlambda} is convex in $\lambda$ for $\lambda \in (0,2)$. The minimum is achieved by 
\begin{equation}
\label{eq:lambda}
\lambda = \frac{2}{\sqrt{\frac{2n \E_\rho\lrs{\hat L(h,S)}}{\lr{\KL(\rho\|\pi) + \ln \frac{2 \sqrt n}{\delta}}} + 1} + 1}
\end{equation}
\citep{TS13}. Note that the optimal value of $\lambda$ is smaller than 1 and that for $n \geq 4$ it can be lower bounded as $\lambda \geq \frac{2}{\sqrt{2 n + 1} + 1} \geq \frac{1}{\sqrt n}$. Alternating application of the update rules \eqref{eq:rho} and \eqref{eq:lambda} monotonously decreases the bound, and thus converges to a local minimum. 

Unfortunately, the bound is \emph{not} jointly convex in $\rho$ and $\lambda$ (this can be verified by computing the Hessian of the first term and taking large $n$, so that the second term can be ignored). Joint convexity would have been a sufficient condition for convergence to the global minimum, but it is \emph{not} a necessary condition. In the following section we show that under certain conditions alternating minimization is still guaranteed to converge to the global minimum of the bound despite absence of joint convexity. 

\section{Strong Quasiconvexity of the PAC-Bayes-$\lambda$ Bound}
\label{sec:quasi}

We denote the right hand side of the bound in Theorem~\ref{thm:PBlambda} by
\[
\F(\rho,\lambda) = \frac{\E_\rho\lrs{\hat L(h,S)}}{1 - \lambda/2} + \frac{\KL(\rho\|\pi) + \ln \frac{2 \sqrt{n}}{\delta}}{n \lambda(1-\lambda/2)}.
\]
By substituting the optimal value of $\rho$ from equation \eqref{eq:rho} into $\F(\rho,\lambda)$ and applying the identity
\begin{align}
\KL(\rho_\lambda\|\pi) &= \E_{\rho_\lambda}\lrs{\ln \frac{\rho_\lambda(h)}{\pi(h)}} = \E_{\rho_\lambda}\lrs{\ln \frac{e^{-n\lambda \hat L(h,S)}}{\E_\pi\lrs{e^{-n\lambda \hat L(h',S)}}}}\notag\\ &= -n\lambda \EL - \ln\Epi
\label{eq:KL}
\end{align}
we can write $\F$ as a function of a single scalar parameter $\lambda$:
\begin{align*}
\F(\lambda) &= \frac{\E_{\rho_\lambda}\lrs{\hat L(h,S)}}{1 - \lambda/2} + \frac{\KL(\rho_\lambda\|\pi) + \ln \frac{2 \sqrt{n}}{\delta}}{n \lambda(1-\lambda/2)}\\
&= \frac{\E_{\rho_\lambda}\lrs{\hat L(h,S)}}{1 - \lambda/2} - \frac{\E_{\rho_\lambda}\lrs{\hat L(h,S)}}{1 - \lambda/2} + \frac{-\ln \Epi + \ln \frac{2 \sqrt{n}}{\delta}}{n \lambda(1-\lambda/2)}\\
&= \frac{- \ln \Epi + \ln \frac{2 \sqrt{n}}{\delta}}{n \lambda(1-\lambda/2)}.
\end{align*}

We note that $\F(\lambda)$ is not necessarily convex in $\lambda$. For example, taking ${\cal H} = \lrc{h_1,h_2}$, $\hat L(h_1,S) = 0$, $\hat L(h_2,S) = 0.5$, $\pi(h_1) = \pi(h_2) = \frac{1}{2}$, $n = 100$, and $\delta = 0.01$ produces a non-convex $\F$. However, we show that $\F(\lambda)$ is strongly quasiconvex under a certain condition on the variance defined by
\[
\VL = \ELL - \EL^2.
\]
We recall that a univariate function $f:\mathcal{I}\to\R$ defined on an interval $\mathcal{I} \subseteq \R$ is \emph{strongly quasiconvex} if for any $x,y \in \mathcal{I}$ and $t \in (0,1)$ we have $f(tx + (1-t)y) < \max\lrc{f(x),f(y)}$.
\begin{theorem}
$\F(\lambda)$ is continuous and if at least one of the two conditions
\begin{equation}
2\KL(\rho_\lambda\|\pi) + \ln\frac{4n}{\delta^2} > \lambda^2 n^2 \VL \label{eq:KL-cond}
\end{equation}
or
\begin{equation}
\EL > (1-\lambda) n \VL \label{eq:E-cond}
\end{equation}
is satisfied for all $\lambda \in \lrs{\sqrt{\frac{\tail}{n}}, 1}$, then $\F(\lambda)$ is strongly quasiconvex for $\lambda \in (0,1]$ and alternating application of the update rules \eqref{eq:rho} and \eqref{eq:lambda} converges to the global minimum of $\F$.
\label{thm:global-min}
\end{theorem}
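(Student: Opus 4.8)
The plan is to make $\F$ explicit as a ratio, reduce strong quasiconvexity to a sign condition on $\F''$ at stationary points, and then show that each of the two hypotheses forces that condition. First I would set $g(\lambda)=\tail-\ln\Epi$ (the numerator of $\F(\lambda)$) and $h(\lambda)=\lambda(1-\lambda/2)$, so that $\F(\lambda)=g(\lambda)/(n\,h(\lambda))$. Because $\hat L(h,S)\in[0,1]$, the map $\lambda\mapsto\ln\Epi$ is a finite (indeed real-analytic) log-moment-generating function on all of $\R$, so $g$ is smooth; since $h>0$ and $g\ge\tail>0$ on $(0,2)$, $\F$ is smooth there, in particular continuous on $(0,1]$, with $\F(\lambda)\to\infty$ as $\lambda\to0^{+}$. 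To get strong quasiconvexity on $(0,1]$ it is enough to prove that $\F'(\lambda)=0$ implies $\F''(\lambda)>0$ for every $\lambda\in(0,1)$: a $C^2$ function on an interval all of whose stationary points are strict local minima has at most one such point (two would force an intermediate local maximum, a stationary point with $\F''\le0$), hence is strictly monotone on each side of it — strictly decreasing near $0^{+}$ — and therefore strongly quasiconvex on $(0,1]$; the endpoint $\lambda=1$ is harmless because $h'(1)=0$ gives $\F'(1)=2\,\E_{\rho_1}[\hat L(h,S)]\ge0$.

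Next I would differentiate, using that $\rho_\lambda$ is the exponential tilt of $\pi$, so $\frac{d}{d\lambda}\ln\Epi=-n\EL$ and $\frac{d^{2}}{d\lambda^{2}}\ln\Epi=n^{2}\VL$, whence $g'=n\EL$ and $g''=-n^{2}\VL$. Passing to $\phi:=\ln g-\ln h$ (so $\F=\frac1n e^{\phi}$ and $\F'=0\Leftrightarrow\phi'=0$), the $(g'/g)^{2}$ and $(h'/h)^{2}$ terms of $\phi''$ cancel at a stationary point, leaving $\phi''(\lambda)=\frac{g''(\lambda)}{g(\lambda)}-\frac{h''(\lambda)}{h(\lambda)}=\frac{1}{h(\lambda)}-\frac{n^{2}\VL}{g(\lambda)}$, with $\F''(\lambda)$ of the same sign. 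So the strict-local-minimum condition at a stationary $\lambda$ is the inequality $(\star)$: $g(\lambda)>n^{2}\,\VL\,h(\lambda)$.

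Then I would exploit the stationarity equation $\phi'(\lambda)=0$, i.e.\ $n\EL/g(\lambda)=(1-\lambda)/\lr{\lambda(1-\lambda/2)}$. Substituting $g(\lambda)=\KL(\rho_\lambda\|\pi)+n\lambda\EL+\tail$ (equation~\eqref{eq:KL}) and simplifying gives the key identity
\[
\lr{\KL(\rho_\lambda\|\pi)+\tail}(1-\lambda)=\tfrac12\,n\lambda^{2}\,\EL .
\]
Fed back into $(\star)$, this collapses — after cancelling the common positive factor $n\lambda(2-\lambda)/2$ — to $\EL>(1-\lambda)\,n\VL$, which is precisely condition~\eqref{eq:E-cond} at $\lambda$. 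Fed into condition~\eqref{eq:KL-cond} at $\lambda$ (using $\ln\frac{4n}{\delta^{2}}=2\,\tail$), it collapses \emph{that} inequality to the same $\EL>(1-\lambda)\,n\VL$. Hence at every interior stationary point $(\star)$, \eqref{eq:E-cond} and \eqref{eq:KL-cond} are mutually equivalent, so whichever of the two hypotheses holds at $\lambda$ yields $(\star)$, i.e.\ $\F''(\lambda)>0$. Finally, the same identity together with $\KL(\rho_\lambda\|\pi)\ge0$ and $\EL\le1$ gives $n\lambda^{2}\ge2\,\tail\,(1-\lambda)$, which keeps every interior stationary point inside $[\sqrt{\tail/n},1]$, where the hypotheses are assumed; this completes the proof that $\F$ is strongly quasiconvex on $(0,1]$.

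For the convergence claim, the updates \eqref{eq:rho}--\eqref{eq:lambda} monotonically decrease $\F(\rho,\lambda)$, and \eqref{eq:lambda} always returns a $\lambda$ in a fixed compact subinterval of $(0,1]$; hence $\F(\lambda_t)$ is non-increasing, the $\lambda_t$ accumulate, and any accumulation point is a coordinate-wise minimum of $\F(\rho,\lambda)$, hence — by the envelope identity, $\rho_\lambda$ being the $\rho$-minimizer — a stationary point of $\F(\lambda)$, which strong quasiconvexity makes unique and equal to the global minimizer over $(0,2)$ (on $[1,2)$, $g$ is nondecreasing while $h$ decreases, so $\F$ increases there). The main obstacle, and the real content, is the third step: realizing that the stationarity identity makes the two a priori different hypotheses \eqref{eq:KL-cond}, \eqref{eq:E-cond} and the second-derivative condition $(\star)$ into one and the same statement at stationary points — the algebra itself is routine. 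Secondary technical points are the differential-geometry reduction, pinning the interior stationary points to $[\sqrt{\tail/n},1]$, and the endpoint $\lambda=1$.
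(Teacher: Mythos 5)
Your proposal is correct and follows essentially the same route as the paper's proof: compute the derivatives of $\F(\lambda)$ via the log-partition function, characterize stationary points by $2(1-\lambda)\lr{\KL(\rho_\lambda\|\pi)+\tail}=\lambda^2 n\EL$, use that identity to localize them in $\lrs{\sqrt{\tail/n},1}$ and to show that either hypothesis forces $\F''>0$ there, then conclude strong quasiconvexity from the one-dimensional topology. Your passage to $\phi=\ln g-\ln h$ is a mildly slicker bookkeeping of the same second-derivative cancellation, and your convergence argument is a more careful spelling-out of the paper's one-line claim, but neither changes the substance.
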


\begin{proof}
The proof is based on inspection of the first and second derivative of $\F(\lambda)$. Calculation of the derivatives is provided in Appendix~\ref{app:derivatives}. The existence of the first derivative ensures continuity of $\F(\lambda)$. By inspecting the first derivative we obtain that stationary points of $\F(\lambda)$ corresponding to $\F'(\lambda) = 0$ are characterized by the identity
\[
2 (1 - \lambda) \lr{\KL(\rho_\lambda\|\pi) + \ln \frac{2\sqrt n}{\delta}} = \lambda^2 n \EL.
\]
The identity provides a lower bound on the value of $\lambda$ at potential stationary points. Using the facts that $\EL \leq 1$ and for $\lambda \leq \frac{1}{2}$ the complement $(1-\lambda) \geq \frac{1}{2}$, for $n \geq 7$ we have
\[
\lambda = \sqrt{\frac{2 (1 - \lambda) \lr{\KL(\rho_\lambda\|\pi) + \ln \frac{2\sqrt n}{\delta}}}{n\EL}} \geq \min\lr{\sqrt{\frac{\KL(\rho_\lambda\|\pi)+\tail}{n}}~,~\frac{1}{2}} \geq \sqrt{\frac{\tail}{n}}.
\]
By expressing $\KL(\rho_\lambda\|\pi)$ via $\EL$ (or the other way around) and substituting it into the second derivative of $\F(\lambda)$ we obtain that if either of the two conditions of the theorem is satisfied at a stationary point then the second derivative of $\F(\lambda)$ is positive there. Thus, if \eqref{eq:KL-cond} or \eqref{eq:E-cond} is satisfied for all $\lambda \in \lrs{\sqrt{\frac{\tail}{n}}, 1}$ then all stationary points of $\F(\lambda)$ (if any exist) are local minima. Since $\F(\lambda)$ is a continuous one-dimensional function it means that $\F(\lambda)$ is strongly quasiconvex (it has a single global minimum and no other stationary points). Since alternating minimization monotonously decreases the value of $\F(\lambda)$ it is guaranteed to converge to the global minimum.
\end{proof}

Next we show a sufficient condition for \eqref{eq:KL-cond} to be satisfied for a finite ${\cal H}$ for all $\lambda$.
\begin{theorem}
Let $m$ be the number of hypotheses in ${\cal H}$ and assume that the prior $\pi(h)$ is uniform. Let $a = \frac{\sqrt{\ln \frac{4n}{\delta^2}}}{n\sqrt 3}$, $b = \frac{\ln(3mn^2)}{\sqrt{n\tail}}$, and $K = \frac{e^2}{12}\ln\frac{4n}{\delta^2}$. Let $x_h = \hat L(h,S) - \min_h \hat L(h,S)$. If the number of hypotheses for which $x_h \in (a,b)$ is at most $K$ then $\VL \leq \frac{\ln \frac{4n}{\delta^2}}{\lambda^2 n^2}$ for all $\lambda \in \lrs{\sqrt{\frac{\tail}{n}}, 1}$ and $\F(\lambda)$ is strongly quasiconvex and its global minimum is guaranteed to be found by alternating application of the update rules \eqref{eq:rho} and \eqref{eq:lambda}.
\label{thm:var}
\end{theorem}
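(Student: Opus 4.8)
The plan is to bound the variance $\VL$ directly in terms of the spread of the empirical losses $x_h = \hat L(h,S) - \min_h \hat L(h,S)$ under the Gibbs posterior $\rho_\lambda$, and then to argue that whenever the ``bulk'' of the mass sits at losses $x_h$ that are either very small ($\leq a$) or very large ($\geq b$), the variance must be tiny. First I would note that shifting all losses by a constant changes neither $\rho_\lambda$ nor $\VL$, so we may work with the $x_h$'s, and by the definition of $\rho_\lambda$ the hypothesis $h$ achieving the minimum loss receives weight at least $\pi(h) = 1/m$ after renormalization (using the uniform prior), so $\rho_\lambda$ is concentrated near $x_h = 0$. Then $\VL \leq \E_{\rho_\lambda}[x_h^2]$, and I would split this second moment according to whether $x_h \le a$, $x_h \in (a,b)$, or $x_h \ge b$.

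For the small-loss region, $x_h \le a$ already gives $x_h^2 \le a^2 = \frac{\ln(4n/\delta^2)}{3n^2}$, which is within a constant factor of the target $\frac{\ln(4n/\delta^2)}{\lambda^2 n^2}$ after using the lower bound $\lambda \ge \sqrt{(\tail)/n}$ that pins $\lambda^2 n^2 \ge n\tail$ — so the constants $1/3$ and the $e^2/12$ in $K$ are chosen precisely to leave room for the other two contributions. For the large-loss region, $x_h \ge b$, I would use the exponential tilting: each such hypothesis carries posterior weight at most proportional to $e^{-\lambda n x_h}$ relative to the minimizer's weight $\ge 1/m$, so the total contribution to $\E_{\rho_\lambda}[x_h^2 \mathbf{1}\{x_h \ge b\}]$ is at most $m \sup_{x \ge b} x^2 e^{-\lambda n x}$; since $x^2 e^{-\lambda n x}$ is decreasing for $x \ge 2/(\lambda n)$ and $b = \frac{\ln(3mn^2)}{\sqrt{n\tail}} \ge \frac{2}{\lambda n}$ in the relevant range of $\lambda$, this is at most $m b^2 e^{-\lambda n b}$, and the choice $b \gtrsim \frac{\ln(3mn^2)}{\lambda n}$ makes $m e^{-\lambda n b} \lesssim 1/n^2$, killing this term. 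Finally, for the middle region $x_h \in (a,b)$, there are at most $K$ such hypotheses by hypothesis, each contributes at most $b^2$ to the second moment and has posterior weight at most $m e^{-\lambda n a} \le m$ times... — more carefully, each has weight at most $\frac{e^{-\lambda n x_h}}{1/m} \le m e^{-\lambda n a}$, so the total is at most $K m b^2 e^{-\lambda n a}$; substituting $a$, $b$, $K$ and using $\lambda n a \le$ a constant one checks this is again $\lesssim \frac{\ln(4n/\delta^2)}{\lambda^2 n^2}$. Summing the three bounds and absorbing constants gives $\VL \le \frac{\ln(4n/\delta^2)}{\lambda^2 n^2}$, which is exactly condition \eqref{eq:KL-cond} with the $\KL$ term dropped (it is nonnegative), so Theorem~\ref{thm:global-min} applies and yields strong quasiconvexity and convergence of the alternating scheme.

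The main obstacle I expect is the bookkeeping in the middle region: one must verify that the product $K m b^2 e^{-\lambda n a}$ really does stay below the target uniformly over $\lambda \in [\sqrt{(\tail)/n},1]$, and this is where the seemingly magic constants $\frac{1}{\sqrt 3}$ in $a$, the $3mn^2$ inside the logarithm in $b$, and the $\frac{e^2}{12}$ in $K$ all have to conspire — in particular the factor $e^2$ in $K$ is presumably there to cancel an $e^{\lambda n a} \le e^{?}$ that arises because $a$ was chosen so that $\lambda n a \le$ (a small constant, likely $1$) across the whole $\lambda$-range, using $\lambda \le 1$ and $\lambda n a \le n a = \sqrt{\ln(4n/\delta^2)}/\sqrt 3$... which is not bounded, so more likely the argument bounds $\lambda^2 n a$ rather than $\lambda n a$, i.e.\ one keeps one factor of $\lambda n$ paired with the decaying exponential and the other factor of $\lambda$ paired with $a$. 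Getting this pairing right — deciding which powers of $\lambda n$ to spend on exponential decay and which to carry into the algebraic comparison with $a^2$ and $b^2$ — is the delicate part; the rest is routine substitution and the three-region split described above.
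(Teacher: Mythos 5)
Your overall strategy is the paper's: reduce to $\E_{\rho_\lambda}\lrs{x_h^2}$, note the normalizer $\sum_h e^{-n\lambda x_h}\geq 1$ (because $x_{h^*}=0$), and split into the three regions $[0,a]$, $(a,b)$, $[b,1]$. Your treatment of the small-loss region ($x_h^2\leq a^2$) and the large-loss region (monotonicity of $x^2e^{-n\lambda x}$ for $x\geq 2/(\lambda n)$, then $mb^2e^{-n\lambda b}\leq me^{-\sqrt{n\tail}\,b}=1/(3n^2)$) matches the paper. But the middle region, which you yourself flag as the delicate part, is where your argument genuinely fails, in two ways. First, the spurious factor of $m$: the posterior weight of $h$ is at most $e^{-n\lambda x_h}/\sum_{h'}e^{-n\lambda x_{h'}}\leq e^{-n\lambda x_h}$, not $m\,e^{-n\lambda x_h}$; dividing by the minimizer's prior mass $1/m$ is the wrong normalization, and since $K$ does not grow with $m$, any factor of $m$ left in the middle-region bound is fatal. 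Second, and more importantly, the bound $K\,b^2\,e^{-\lambda n a}$ does not stay below the target even without the factor $m$: at $\lambda=\sqrt{\tail/n}$ one has $e^{-\lambda n a}\approx 1$ and $Kb^2\approx\frac{e^2}{12}\ln\frac{4n}{\delta^2}\cdot\frac{(\ln(3mn^2))^2}{n\tail}$, which exceeds $\frac{1}{3}\cdot\frac{\ln(4n/\delta^2)}{\lambda^2n^2}=\frac{\ln(4n/\delta^2)}{3n\tail}$ by a factor of order $(\ln(3mn^2))^2$, and the gap only widens as $\lambda\to 1$. Decoupling $x_h^2\leq b^2$ from $e^{-n\lambda x_h}\leq e^{-n\lambda a}$ throws away exactly the cancellation you need.

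The missing idea is to bound each middle-region summand by the \emph{pointwise maximum of the product}: elementary calculus gives
\begin{equation*}
x^2 e^{-n\lambda x}\;\leq\;\max_{x\geq 0}\,x^2e^{-n\lambda x}\;=\;\frac{4}{e^2n^2\lambda^2},
\end{equation*}
attained at $x=2/(n\lambda)$. Summing over the at most $K$ hypotheses in $(a,b)$ then yields $\frac{4K}{e^2n^2\lambda^2}=\frac{\ln(4n/\delta^2)}{3\lambda^2n^2}$ exactly by the choice $K=\frac{e^2}{12}\ln\frac{4n}{\delta^2}$ --- so the $e^2$ in $K$ comes from the value $4/e^2$ of $\max_t t^2e^{-t}$, not from cancelling an $e^{\lambda na}$ factor as you conjectured. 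Note also that with this argument the endpoints $a$ and $b$ play no role in the middle region at all; they are needed only so that the good region contributes at most $a^2$ and the bad region at most $me^{-n\lambda b}$, and the ``pairing of powers of $\lambda n$'' you were worried about never arises. With this substitution your proof closes and coincides with the paper's.
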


The theorem splits hypotheses in ${\cal H}$ into ``good'', ``mediocre'', and ``bad''. ``Good'' hypotheses are those for which $x_h \leq a$, meaning that the empirical loss $\hat L(h,S)$ is close to the best. ``Mediocre'' are those for which $x_h \in (a,b)$. ``Bad'' are those for which $x_h \geq b$.  The theorem states that as long as the number of ``mediocre'' hypotheses is not too large, $\F(\lambda)$ is guaranteed to be quasiconvex.

\begin{proof}
We have $\VL = \V_{\rho_\lambda}\lrs{\hat L(h,S) - \min_h \hat L(h,S)}$. Under the assumption of a uniform prior $\rho_\lambda(h) = e^{-n\lambda x_h} / \sum_{h'} e^{-n\lambda x_{h'}}$. Since for $h^* = \arg\min_h \hat L(h,S)$ we have $x_{h^*} = 0$, the denominator satisfies $\sum_h e^{-n \lambda x_h} \geq 1$. Let $\IA = [0,a]$, $\IB = (a,b)$, and $\IC = [b,1]$ be the intervals corresponding to ``good'', ``mediocre'', and ``bad'' hypotheses. We have:
\begin{align*}
\VL &\leq \E_{\rho_\lambda}\lrs{x_h^2}\\
&= \frac{\sum_h x_h^2 e^{-n\lambda x_h}}{\sum_h e^{-n\lambda x_h}}\\
&= \frac{\sum_{x_h\in \IA} x_h^2 e^{-n\lambda x_h}}{\sum_h e^{-n\lambda x_h}} + \frac{\sum_{x_h\in \IB} x_h^2 e^{-n\lambda x_h}}{\sum_h e^{-n\lambda x_h}} + \frac{\sum_{x_h\in \IC} x_h^2 e^{-n\lambda x_h}}{\sum_h e^{-n\lambda x_h}}\\
&\leq a^2 + \sum_{x_h\in \IB} x_h^2 e^{-n\lambda x_h} + \sum_{x_h\in \IC} x_h^2 e^{-n\lambda x_h}.
\end{align*}
We show a number of properties of the above expression. First, we recall that $\lambda \leq 1$. Therefore,
\[
a^2 = \frac{\ln \frac{4n}{\delta^2}}{3n^2} \leq \frac{\ln\frac{4n}{\delta^2}}{3\lambda^2 n^2}.
\]
For the second term, simple calculus gives $x^2 e^{-n\lambda x} \leq \frac{4}{e^2 n^2 \lambda^2}$. Since by the theorem assumption there are at most $K$ hypotheses falling in $\IB$,
\[
\sum_{h\in \IB} x_h^2 e^{-n\lambda x_h} \leq \frac{4K}{e^2 n^2 \lambda^2} \leq \frac{\ln\frac{4n}{\delta^2}}{3\lambda^2 n^2}.
\]
For the last term we have
\[
b > \frac{2}{\sqrt{n\tail}} = \frac{2}{n}\sqrt{\frac{n}{\tail}} \geq \frac{2}{\lambda n}
\]
and we note that for $x \geq 2 / \lambda n$ the function $x^2 e^{-n\lambda x}$ is monotonically decreasing in $x$. Since $\lambda \geq \sqrt{\frac{\tail}{n}}$ we obtain
\begin{equation}
\label{eq:b}
\sum_{x_h \in \IC} x_h^2 e^{-n\lambda x_h} \leq m b^2 e^{-n\lambda b} \leq m b^2 e^{-\sqrt{n\tail} b} \leq m e^{-\sqrt{n\tail} b} = m \frac{1}{3m n^2} = \frac{1}{3n^2} \leq \frac{\ln\frac{4n}{\delta^2}}{3\lambda^2 n^2}.
\end{equation}
By taking all three terms together we arrive at
\[
\VL \leq \frac{\ln \frac{4n}{\delta^2}}{\lambda^2 n^2},
\]
which implies condition \eqref{eq:KL-cond} of Theorem~\ref{thm:global-min} since $\KL(\rho_\lambda\|\pi) \geq 0$.
\end{proof}
In Appendix~\ref{app:thm-var} we provide a couple of relaxations of the conditions in Theorem~\ref{thm:var}. The first allows to trade the boundaries $a$ and $b$ of the intervals with the value of $K$ and the second improves the value of $b$.

In our experiments presented in Section~\ref{sec:experiments}, $\F(\lambda)$ turned to be convex even when the sufficient conditions of Theorem~\ref{thm:var} (including the relaxations detailed in Appendix~\ref{app:thm-var}) were violated. This suggests that it may be possible to relax the conditions even further. At the same time, it is possible to construct artificial examples, where $\F(\lambda)$ is not quasiconvex. For example, taking $n = 200$, $\delta = 0.25$, and $m = \round\lr{e^{0.74 n\Delta}} + 1 \approx 2.7 \cdot 10^6$ hypotheses (where $\round$ is rounding to the nearest integer) with $\hat L(h_1, S) = 0$ and $\hat L(h_i, S) = \Delta = 0.1$ for all $i \in \lrc{2,\dots,m}$ and a uniform prior leads to $\F(\lambda)$ with two local minima. The artificial example requires $m$ to be of the order of $e^{n \lambda^\star \Delta}$, where $\lambda^\star$ is the value of $\lambda$ at a stationary point of $\F(\lambda)$ and $\Delta$ is the loss of suboptimal hypotheses (in the example $\Delta = 0.1$). Thus, quasiconvexity is not always guaranteed, but it holds in a wide range of practical scenarios.

\section{Construction of a Hypothesis Space}
\label{sec:aggregation}

Computation of the partition function (the denominator in \eqref{eq:rho}) is not always tractable, however, it can be easily computed when ${\cal H}$ is finite. 
The crucial step is to construct a sufficiently powerful finite hypothesis space ${\cal H}$. Our proposal is to construct ${\cal H}$ by training $m$ hypotheses, where each hypothesis is trained on $r$ random points from $S$ and validated on the remaining $n-r$ points. This construction resembles a cross-validation split of the data. However, in cross-validation $r$ is typically large (close to $n$) and validation sets are non-overlapping. Our approach works for any $r$ and has additional computational advantages when $r$ is small. We do not require validation sets to be non-overlapping and overlaps between training sets are allowed. Below we describe the construction more formally.

Let $h \in \lrc{1,\dots,m}$ index the hypotheses in ${\cal H}$. Let $S_h$ denote the training set of $h$ and $S\setminus S_h$ the validation set. $S_h$ is a subset of $r$ points from $S$, which are selected independently of their values (for example, subsampled randomly or picked according to a predefined partition of the data). We define the validation error of $h$ by $\Lval(h,S) = \frac{1}{n-r} \sum_{(X,Y) \in S\setminus S_h} \ell(h(X),Y)$. Note that the validation errors are $(n-r)$ i.i.d.\ random variables with bias $L(h)$ and, therefore, for $f(h,S) = (n-r) \kl(\Lval(h,S)\|L(h))$ we have $\E_S\lrs{e^{f(h,S)}} \leq 2 \sqrt{n-r}$. The following result is a straightforward adaptation of Theorem~\ref{thm:PBlambda} to our setting. A proof sketch is provided in Appendix~\ref{app:PBaggregation}.
\begin{theorem} Let $S$ be a sample of size $n$. Let ${\cal H}$ be a set of $m$ hypotheses, where each $h \in {\cal H}$ is trained on $r$ points from $S$ selected independently of the composition of $S$. For any probability distribution $\pi$ over ${\cal H}$ that is independent of $S$ and any $\delta \in (0,1)$, with probability greater than $1-\delta$  over a random draw of a sample S, for all distributions $\rho$ over $\cal H$ and $\lambda \in (0,2)$ simultaneously:
\begin{equation}
\label{eq:PBaggregation}
\E_\rho\lrs{L(h)} \leq \frac{\E_\rho\lrs{\Lval(h,S)}}{1 - \frac{\lambda}{2}} + \frac{\KL(\rho\|\pi) + \ln \frac{2 \sqrt{n-r}}{\delta}}{\lambda\lr{1-\frac{\lambda}{2}}(n-r)}.
\end{equation}
\label{thm:PBaggregation}
\end{theorem}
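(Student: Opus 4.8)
The plan is to reduce Theorem~\ref{thm:PBaggregation} to the analog of the PAC-Bayes-kl inequality (Theorem~\ref{thm:PBkl}) with $n$ replaced by $n-r$ and $\hat L(h,S)$ replaced by $\Lval(h,S)$, and then to repeat the relaxation chain used in the proof of Theorem~\ref{thm:PBlambda} almost verbatim. Concretely, the target of the first (new) part is to show that with probability at least $1-\delta$, for all $\rho$ over ${\cal H}$ simultaneously,
\[
\kl\lr{\E_\rho\lrs{\Lval(h,S)}\middle\|\E_\rho\lrs{L(h)}} \leq \frac{\KL(\rho\|\pi) + \ln \frac{2 \sqrt{n-r}}{\delta}}{n-r}.
\]
Once this is in hand, the rest of the argument is purely deterministic and identical to the proof of Theorem~\ref{thm:PBlambda} with $n\mapsto n-r$ and $\hat L\mapsto\Lval$: apply the refined Pinsker inequality \eqref{eq:kl} to relax to the square-root form, then use $\sqrt{xy}\leq\frac12\lr{\lambda x + \frac{y}{\lambda}}$ (which is why the resulting bound holds for all $\lambda\in(0,2)$ simultaneously, the probabilistic step being $\lambda$-free), change sides, and divide by $1-\frac\lambda2$; this yields \eqref{eq:PBaggregation}.

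To prove the displayed kl-inequality I would invoke the change of measure inequality (Lemma~\ref{lem:com}) with the ``hypothesis class'' taken to be the index set $\lrc{1,\dots,m}$ together with the fixed rule that assigns to index $h$ the training subset $S_h$ (selected independently of the data values) and the predictor trained on it, with the prior $\pi$ over indices (which is independent of $S$ by assumption), and with the function $f(h,S) = (n-r)\kl\lr{\Lval(h,S)\middle\|L(h)}$. This $f$ is a legitimate function of the index $h$ and the whole sample $S$, since from $h$ and $S$ one recovers $S_h$, hence the predictor, hence both $L(h)$ and $\Lval(h,S)$. Lemma~\ref{lem:com} then gives, with probability at least $1-\delta$ and for all $\rho$,
\[
(n-r)\,\E_\rho\lrs{\kl\lr{\Lval(h,S)\middle\|L(h)}} \leq \KL(\rho\|\pi) + \ln\frac1\delta + \ln \E_\pi\lrs{\E_{S'}\lrs{e^{(n-r)\kl(\Lval(h,S')\|L(h))}}},
\]
and applying Jensen's inequality to the jointly convex $\kl$ on the left, then dividing by $n-r$, produces the desired bound provided the last term is at most $\ln\lr{2\sqrt{n-r}/\delta}$.

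The one point requiring care — and the main obstacle — is the moment-generating-function bound $\E_{S'}\lrs{e^{(n-r)\kl(\Lval(h,S')\|L(h))}} \leq 2\sqrt{n-r}$ for each fixed index $h$, which is delicate because $L(h)$ (the expected loss of the predictor trained on $S'_h$) is itself random. I would handle this by conditioning on the training subsample $S'_h$ (and, if the split is randomized, on the split as well): conditionally the predictor is fixed, and the $n-r$ validation points $S'\setminus S'_h$ are i.i.d.\ $\sim{\cal D}$ and disjoint from $S'_h$, so $(n-r)\Lval(h,S')$ is a sum of $n-r$ i.i.d.\ $[0,1]$-valued random variables with common mean $L(h)$; the standard bound of \citet{Mau04} then gives $\E\lrs{e^{(n-r)\kl(\Lval(h,S')\|L(h))}\mid S'_h}\leq 2\sqrt{n-r}$, and taking expectation over $S'_h$ preserves it. Plugging this uniform-in-$h$ bound back into Lemma~\ref{lem:com} completes the reduction, and the remainder of the proof is a copy of the proof of Theorem~\ref{thm:PBlambda}.
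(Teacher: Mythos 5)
Your proposal is correct and follows essentially the same route as the paper's own proof sketch: establish the PAC-Bayes-kl analogue with $n-r$ and $\Lval$ via the change of measure lemma and Maurer's bound $\E_{S}\lrs{e^{(n-r)\kl(\Lval(h,S)\|L(h))}}\leq 2\sqrt{n-r}$, then repeat the deterministic relaxation chain of Theorem~\ref{thm:PBlambda}. You are in fact more careful than the paper's sketch on the one delicate point — conditioning on the training subsample $S_h$ so that the predictor (and hence $L(h)$) is fixed while the $n-r$ validation losses are i.i.d.\ with mean $L(h)$ — which the paper leaves implicit in the phrase ``the validation errors are $(n-r)$ i.i.d.\ random variables with bias $L(h)$''.
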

It is natural, but not mandatory to select a uniform prior $\pi(h) = 1/m$. The bound in equation \eqref{eq:PBaggregation} can be minimized by alternating application of the update rules in equations \eqref{eq:rho} and \eqref{eq:lambda} with $n$ being replaced by $n-r$ and $\hat L$ by $\Lval$.


\section{Experimental Results}
\label{sec:experiments}

In this section we study how PAC-Bayesian weighting of weak classifiers proposed in Section~\ref{sec:aggregation} compares with ``strong'' kernel SVMs tuned by cross-validation and trained on all training data. The experiments were performed on eight UCI datasets \citep{Asuncion+Newman:2007} summarized in Table \ref{tbl:uci_sizes}. In our experiments we employed the SVM solver from LIBSVM \citep{libsvm}.

\begin{table}
\begin{center}
\begin{tabular}{|l|c|c|c|c|c|c|c|c|}
\hline
 & Mushrooms & Skin & Waveform & Adult & Ionosphere & AvsB & Haberman & Breast\\ \hline
$|\text{S}|$ & 2000 & 2000 & 2000 & 2000 & 200 & 1000 & 150 & 340 \\ \hline
$|\text{T}|$ & 500 & 500 & 500 & 500 & 150 & 500 & 150 & 340 \\ \hline
$d$ & 112 & 3 & 40 & 122 & 34 & 16 & 3 & 10 \\ \hline
\end{tabular}
\end{center}
\caption{\textbf{Datasets summary.} $|S| = n$ refers to the size of the training set and $|T|$ refers to the size of the test set, $d$ refers to the number of features. ``Breast'' abbreviates ``Breast cancer'' dataset.}
\label{tbl:uci_sizes}
\end{table}

We compared the prediction accuracy and run time of our prediction strategy with a baseline of RBF kernel SVMs tuned by cross-validation. For the baseline we used 5-fold cross-validation for selecting the soft-margin parameter, $C$, and the bandwidth parameter $\gamma$  of the kernel $k(X_i,X_j)=\exp(-\gamma\|X_i-X_j\|^2)$. The value of $C$ was selected from a grid, such that $\log_{10}C \in \{-3,-2, \hdots, 3\}$. The values for the grid of $\gamma$-s were selected using the heuristic proposed in \citet{JDH99}. Specifically, for $i \in \{1,\hdots, n\}$ we defined
$G(X_i) = \min_{(X_j, Y_j) \in S \wedge Y_i \neq Y_j}\left\lVert X_i - X_j \right\rVert $.
We then defined a seed $\gamma_J$ by
$\gamma_{J} = \frac{1}{2\cdot\text{median}(G)^{2}}$.
Finally, we took a geometrically spaced grid around $\gamma_J$, so that $\gamma \in \{\gamma_{J}10^{-4}, \gamma_{J}10^{-2}, \hdots, \gamma_{J}10^{4}\}$.

For our approach we selected $m$ subsets of $r$ points uniformly at random from the training set $S$. We then trained an RBF kernel SVM for each subset. The kernel bandwidth parameter $\gamma$ was randomly selected for each subset from the same grid as used in the baseline. In all our experiments very small values of $r$, typically up to $d+1$ with $d$ being the input dimension, were sufficient for successfully competing with the prediction accuracy of the baseline and provided the most significant computational improvement. For such small values of $r$ it was easy to achieve perfect separation of the training points and, therefore, selection of $C$ was unnecessary.
The performance of each weak classifier was validated on $n-r$ points not used in its training. The weighting of classifiers $\rho$ was then computed through alternating minimization of the bound in Theorem~\ref{thm:PBaggregation}. 

In most of PAC-Bayesian literature it is common to replace randomized prediction with $\rho$-weighted majority vote. From a theoretical point of view the error of $\rho$-weighted majority vote is bounded by at most twice the error of the corresponding randomized classifier, however, in practice it usually performs better than randomized prediction \citep{GLLM09}. In our main experiments we have followed the standard choice of using the $\rho$-weighted majority vote. In Appendix~\ref{app:majority} we provide additional experiments showing that in our case the improvement achieved by taking the majority vote was very minor compared to randomized prediction. We use the term \emph{PAC-Bayesian aggregation} to refer to prediction with $\rho$-weighted majority vote.

\begin{figure}[t]
\centering
\subfigure[Ionosphere dataset. $\LCV = 0.06$.]{
\includegraphics[width=0.5\textwidth]{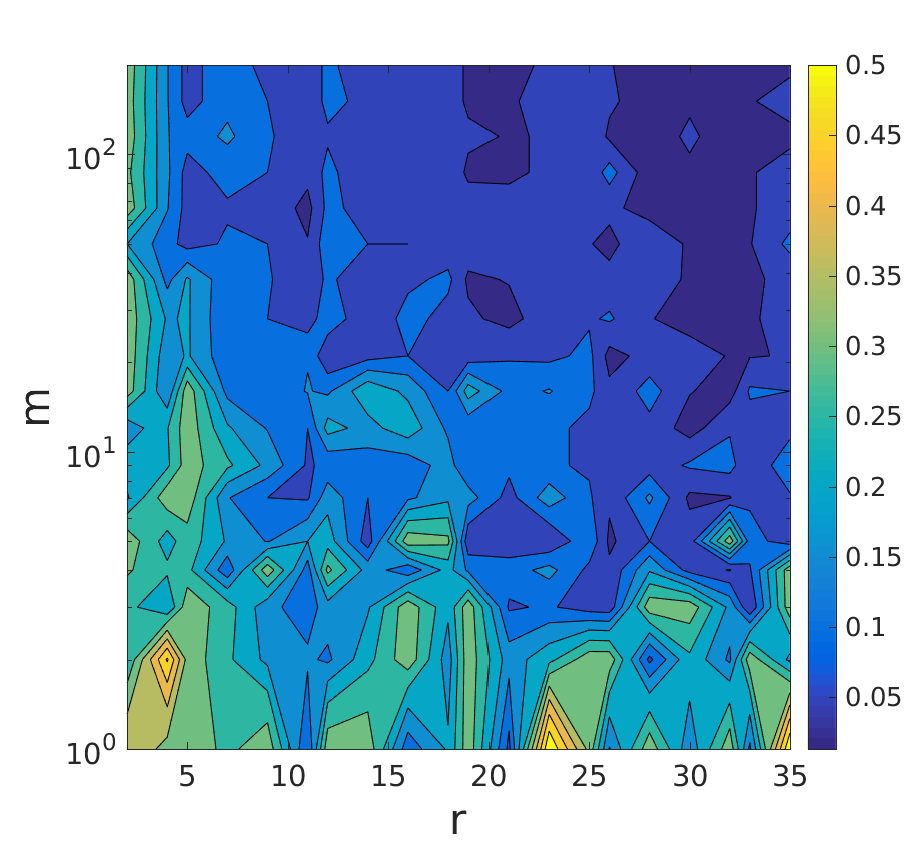}
}%
\subfigure[Mushrooms dataset. $\LCV = 0$]{
\includegraphics[width=0.5\textwidth]{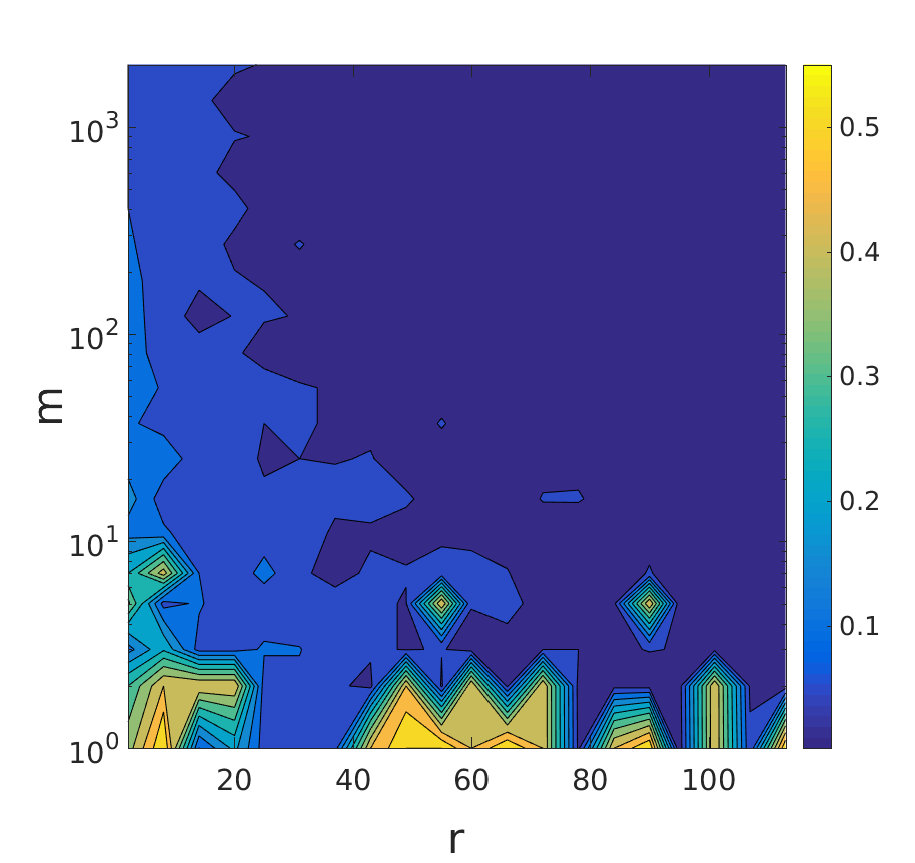}
}
\caption{\textbf{Prediction accuracy of PAC-Bayesian aggregation vs.\ cross-validated SVM across different values of $m$ and $r$.} The colors of the heatmap represent the difference between the zero-one loss of the $\rho$-weighted majority vote and the zero-one loss of the cross-validated SVM. The loss of the cross-validated SVM is given by $\LCV$ in the caption.}
\label{fig:heatmap_kernel}
\end{figure}

\begin{figure}[th!]
\centering
\subfigure[Ionosphere dataset. $n = 200$,\newline$r = d+1 = 35$.]{
\includegraphics[width=0.48\textwidth]{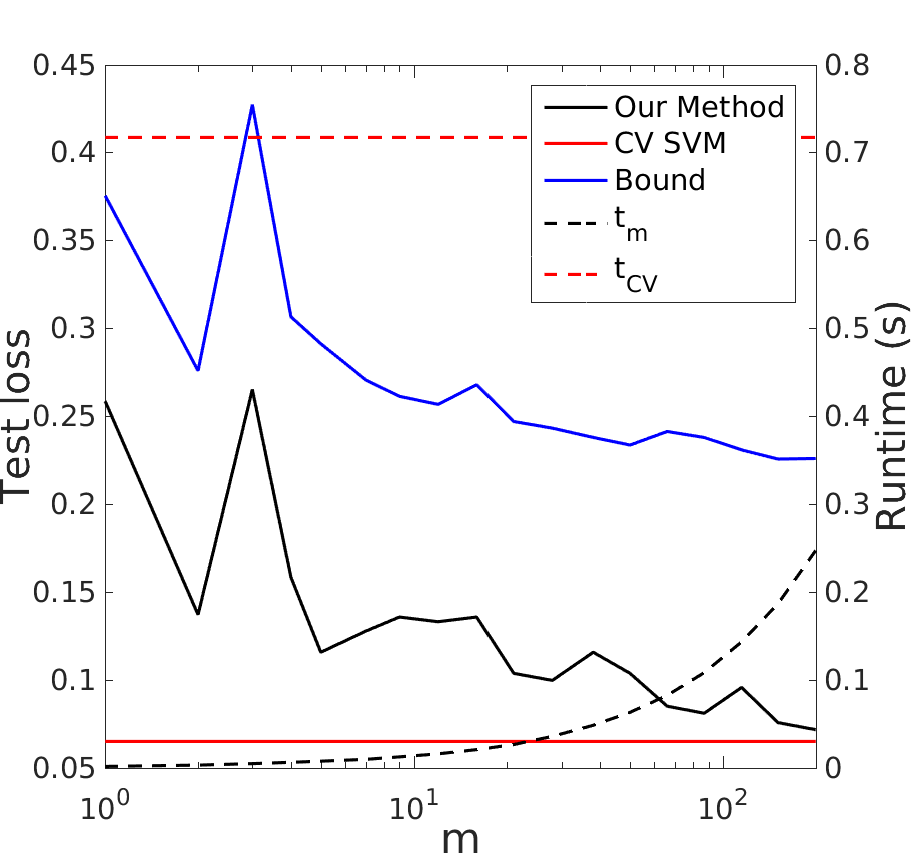}
}%
\subfigure[Waveform dataset. $n = 2000$,\newline$r = d+1 = 41$.]{
\includegraphics[width=0.48\textwidth]{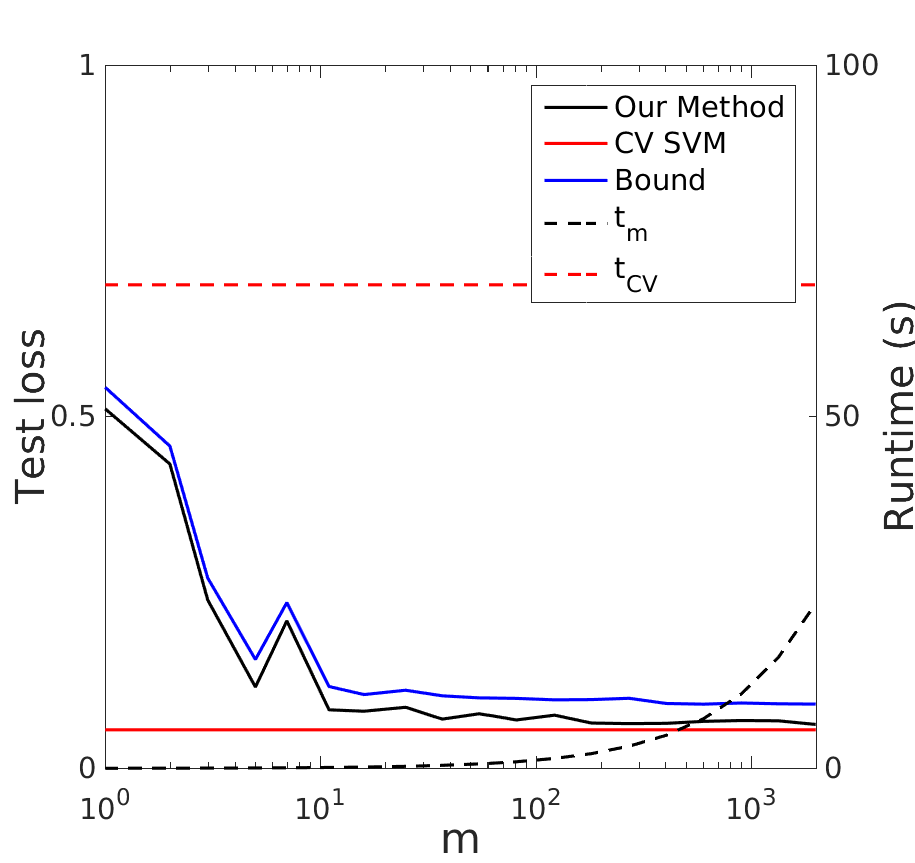}
}
\subfigure[Breast cancer dataset. $n = 340$,\newline$r = d+1 = 11$.]{
\includegraphics[width=0.48\textwidth]{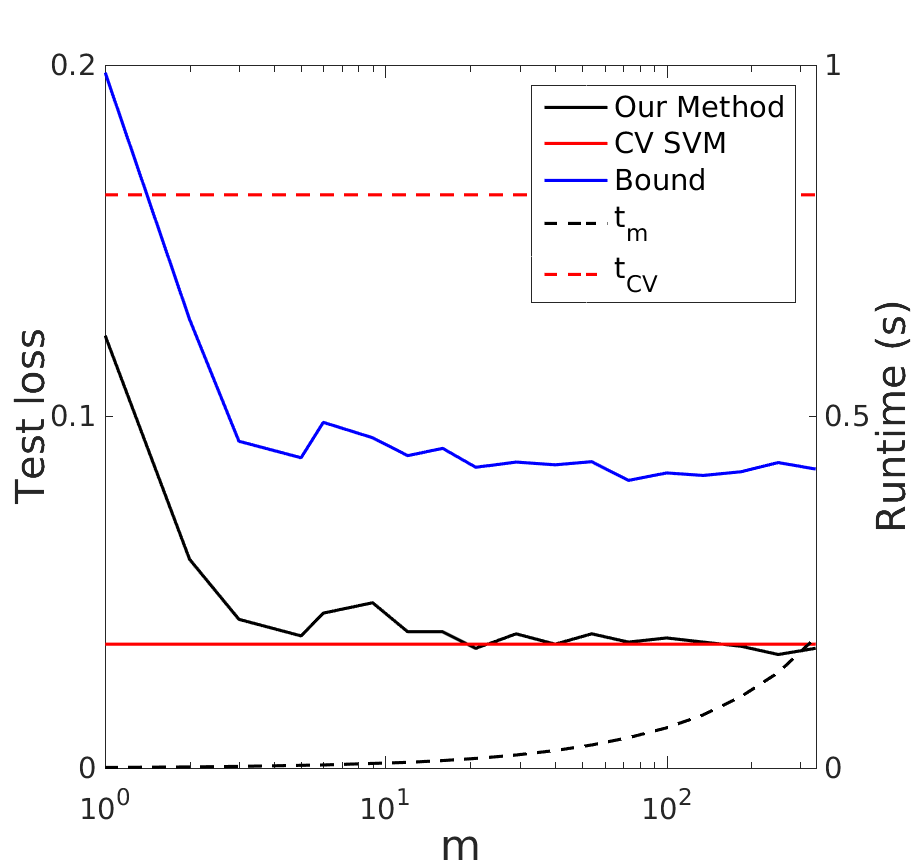}
}%
\subfigure[AvsB dataset. $n = 1000$, $r = d+1 = 17$.]{
\includegraphics[width=0.48\textwidth]{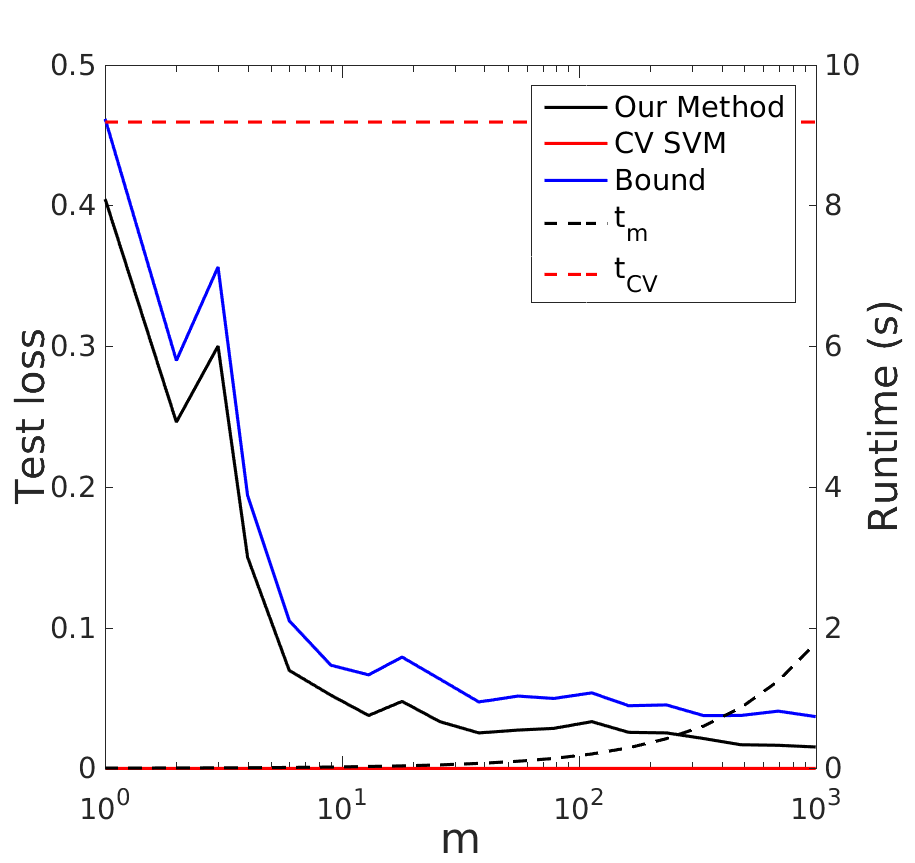}
}
\caption{\textbf{Comparison of PAC-Bayesian aggregation with RBF kernel SVM tuned by cross-validation.} The solid red, black, and blue lines correspond, respectively, to the zero-one test loss of the cross-validated SVM, the loss of $\rho$-weighted majority vote, where $\rho$ is a result of minimization of the PAC-Bayes-$\lambda$ bound, and PAC-Bayes-kl bound on the loss of randomized classifier defined by $\rho$. The dashed black line represents the training time of PAC-Bayesian aggregation, while the red dashed line represents the training time of cross-validated SVM. The prediction accuracy and run time of PAC-Bayesian aggregation and PAC-Bayes-kl bound are given as functions of the hypothesis set size $m$.}
\label{fig:runtime_kernel}
\end{figure}

In the first two experiments we studied the influence of $r$ and $m$ on classification accuracy and run time. The complexity term in Theorem~\ref{thm:PBaggregation} (the second term on the right hand side of \eqref{eq:PBaggregation}) decreases with the decrease of the training set sizes $r$ (because the size of the validation sets $n-r$ increases) and with the decrease of the number of hypotheses $m$ (because $\pi(h) = 1/m$ increases). From the computational point of view it is also desirable to have small $r$ and $m$, especially when working with expensive-to-train models, such as kernel SVMs, which have super-quadratic training time. What pushes $r$ and $m$ up is the validation error, $\E_\rho\lrs{\Lval(h,S)}$.

In the first experiment we studied the influence of $r$ and $m$ on the prediction accuracy of PAC-Bayesian aggregation. We considered 20 values of $m$ in $[1, n]$ and 20 values of $r$ in $[2, d+1]$. For each pair of $m$ and $r$ the prediction accuracy of PAC-Bayesian aggregation was evaluated, resulting in a matrix of losses. To simplify the comparison we subtracted the prediction accuracy of the baseline, thus zero values correspond to matching the accuracy of the baseline. In Figure~\ref{fig:heatmap_kernel} we show a heatmap of this matrix for two UCI datasets and the results for the remaining datasets are provided in Appendix~\ref{app:more-figures}. Overall, reasonably small values of $m$ and $r$ were sufficient for matching the accuracy of SVM tuned by cross-validation.
 
In the second experiment we provide a closer look at the effect of increasing the number $m$ of weak SVMs when their training set sizes $r$ are kept fixed. We picked $r = d+1$ and ran our training procedure with 20 values of $m$ in $[1, n]$. In Figure~\ref{fig:runtime_kernel} we present the prediction accuracy of the resulting weighted majority vote vs.\ prediction accuracy of the baseline for four datasets. The graphs for the remaining datasets are provided in Appendix~\ref{app:more-figures}. We also show the running time of our procedure vs.\ the baseline. The running time of the baseline includes cross-validation and training of the final SVM on the whole training set, while the running time of PAC-Bayesian aggregation includes training of $m$ weak SVMs, their validation, and the computation of $\rho$. In addition, we report the value of PAC-Bayes-kl bound from Theorem~\ref{thm:PBkl} on the expected loss of the randomized classifier defined by $\rho$. The kl divergence was inverted numerically to obtain a bound on the expected loss $\E_\rho\lrs{L(h)}$. The bound was adapted to our construction by replacing $n$ with $n-r$ and $\E_\rho\lrs{\hat L(h,S)}$ with $\E_\rho\lrs{\Lval(h,S)}$. We note that since the bound holds for any posterior distribution, it also holds for the distribution found by minimization of the bound in Theorem~\ref{thm:PBaggregation}. However, since Theorem~\ref{thm:PBaggregation} is a relaxation of PAC-Bayes-kl bound, using PAC-Bayes-kl for the final error estimate is slightly tighter. The bound on the loss of $\rho$-weighted majority vote is at most a factor of 2 larger than than the bound for the randomized classifier. In calculation of the bound we used $\delta = 0.05$. We conclude from the figure that relatively small values of $m$ are sufficient for matching or almost matching the prediction accuracy of the baseline, while the run time is reduced dramatically. We also note that the bound is exceptionally tight.

\begin{table}
\begin{center}
\begin{tabular}{|l|c|c|c|c|c|c|c|c|}
\hline
 & Mushrooms & Skin & Waveform & Adult & Ionosphere & AvsB & Haberman & Breast\\ \hline
$m$ & 130 & 27 & 140 & 28 & 24 & 160 & 23 & 50\\
\hline
\end{tabular}
\end{center}
\caption{\textbf{Average maximal values of $m$ for which quasiconvexity was guaranteed by Theorem~\ref{thm:var}.}}
\label{tbl:quasiconvexity}
\end{table}

\begin{figure*}
\centering
\subfigure[Breast cancer dataset with $|\text{S}| = 340$.]{
\includegraphics[width=0.29\textwidth]{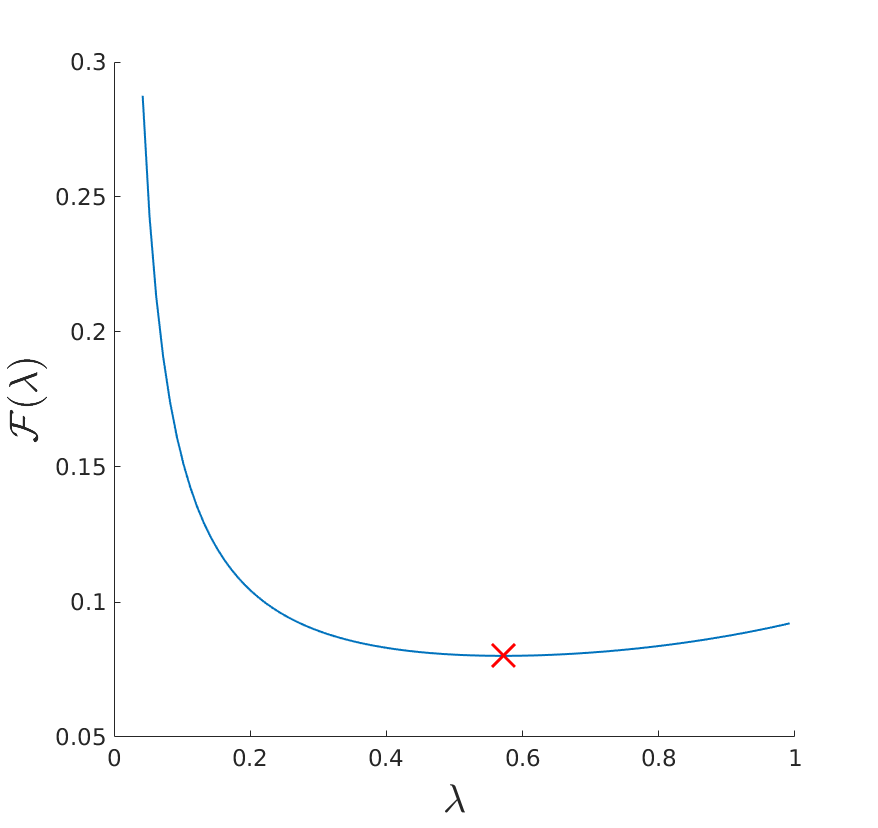}
}\quad
\subfigure[Adult dataset with $|\text{S}| = 2000$.]{
\includegraphics[width=0.29\textwidth]{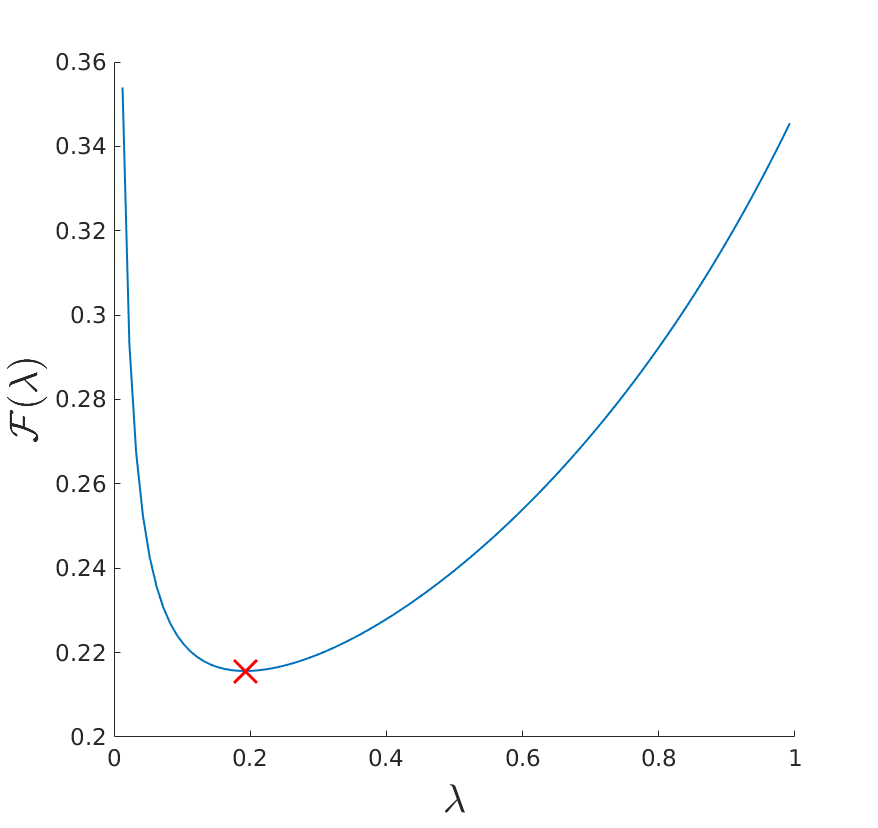}
}\quad
\subfigure[Mushrooms dataset with $|\text{S}| = 2000$.]{
\includegraphics[width=0.29\textwidth]{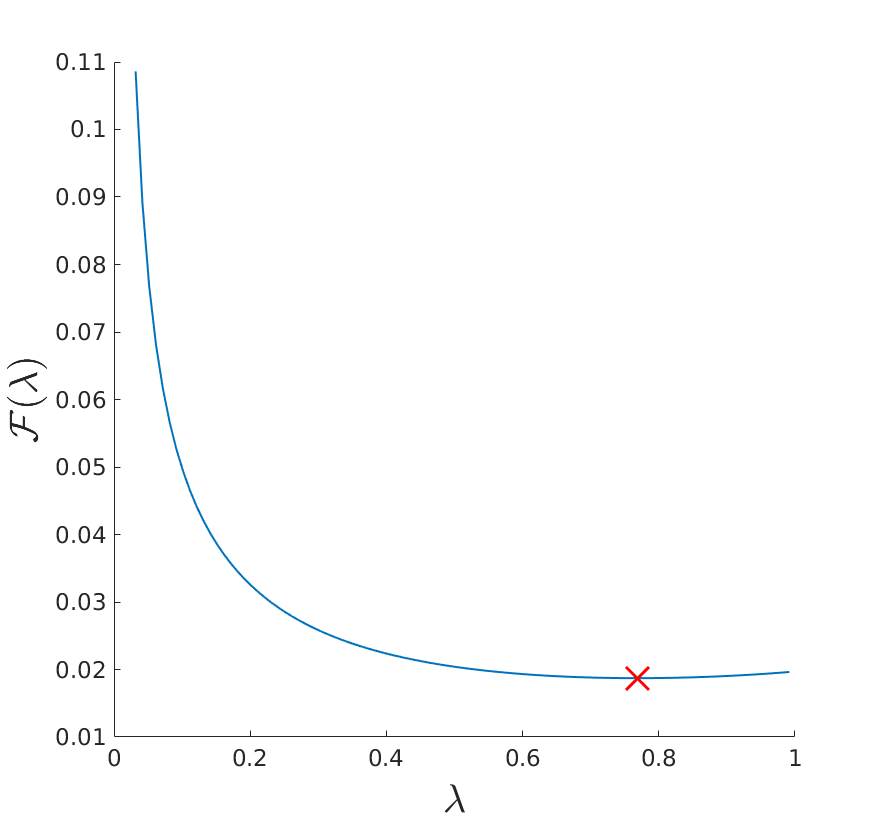}
}
\caption{\textbf{Empirical evaluation of the shape of $\F(\lambda)$}. The blue curve represents the value of $\F(\lambda)$ and the red cross marks $\lambda$ returned by alternating minimization.}
\label{fig:f_lambda_convex}
\end{figure*}

In our last experiment we tested quasiconvexity of $\F(\lambda)$. Theorem~\ref{thm:var} provided theoretical guarantee of quasiconvexity for small $m$ and numerical evaluation has further shown that $\F(\lambda)$ was convex for all values of $m$ used in our experiments. For testing the theoretical guarantees we increased $m$ in steps of 10 until the sufficient condition for strong quasiconvexity in Theorem~\ref{thm:var} was violated. The condition included adjustment of interval boundaries $a$ and $b$, as described in Appendix~\ref{app:thm-var-tune}, and improved value of $b$, as described in Appendix~\ref{app:thm-var-b}. The experiments were repeated 10 times for each dataset, where in each experiment the training sets were redrawn and a new set of hypotheses was trained, leading to a new set of validation losses $\Lval$. In Table~\ref{tbl:quasiconvexity} we report the average over the 10 repetitions of the maximal values of $m$ with guaranteed quasiconvexity. Since $\F(\lambda)$ is always quasiconvex for $m \leq K(0,0) + 1$, where $K(0,0) = \frac{e^2}{4}\ln \frac{4n}{\delta^2}$ (see equation \eqref{eq:Kab} in Appendix~\ref{app:thm-var-tune}), we report the value of $K(0,0)+1$ whenever it was not possible to ensure quasiconvexity with larger $m$. When it was not possible to guarantee quasicovexity theoretically we tested the shape of $\F(\lambda)$ empirically. Figure~\ref{fig:f_lambda_convex} shows a few typical examples. The plots in Figure~\ref{fig:f_lambda_convex} were constructed in the following way: given a sample $S$, we trained $m$ weak SVMs. We then computed the corresponding vector of validation losses, $\Lval(h, S)$. For each value in a grid of $\lambda$-s, we computed the corresponding $\rho$ according to equation \eqref{eq:rho}. Finally, we substituted the value of $\rho_\lambda$ and $\lambda$ into equation \eqref{eq:PBlambda} to get the value of the bound. In all our calculations we used a uniform prior and $\delta = 0.05$. The figure shows that $\F(\lambda)$ was convex in $\lambda$ in all the cases.

\section{Conclusion}

We have presented a new PAC-Bayesian inequality, an alternating procedure for its minimization, and a way to construct a finite hypothesis space for which the bound and minimization procedure work particularly well. We have derived sufficient conditions for the minimization procedure to converge to the global optimum of the bound. We have shown that the procedure is competitive with cross-validation in tuning the trade-off between complexity and empirical performance of $\rho$. In addition, it provides tight high-probability generalization guarantees and achieves prediction accuracies on par with kernel SVMs tuned by cross-validation, but at a considerably lower computation cost.

In our experiments the bound turned to be convex even when the sufficient conditions of Theorem~\ref{thm:var} were violated. It suggests that further relaxation of these conditions may be possible to achieve in future work.

\acks{We thank anonymous reviewers of this and earlier versions of the manuscript for valuable feedback. We also thank Oswin Krause for suggesting the use of the term ``quasiconvexity'' to describe the shape of $\F(\lambda)$. CI and YS acknowledge support by the Innovation Fund Denmark through the \emph{Danish Center for Big Data Analytics Driven Innovation} (DABAI). OW would like to thank the mathematical department of the university of Copenhagen for his guest professorship in 2015-2016.}

\bibliography{bibliography}

\appendix

\section{Calculation of the Derivatives of $\F(\lambda)$}
\label{app:derivatives}

We decompose $\F(\lambda)$ in the following way:
\[
\F(\lambda) = \frac{-\ln \Epi + \tail}{n\lambda(1-\lambda/2)} = f(\lambda) g(\lambda),
\]
where
\begin{align*}
f(\lambda) &= - \frac{1}{n}\ln \Epi + \frac{\tail}{n},\\
g(\lambda) &= \frac{1}{\lambda(1-\lambda/2)}.
\end{align*}
For the derivatives of $f$ and $g$ we have:
\[
f'(\lambda)  = - \frac{\frac{d}{d\lambda}\Epi}{n\Epi} = \frac{\EpiL}{\Epi}  = \EL \geq 0.
\]
\begin{align*}
f''(\lambda) &= \frac{\lr{\frac{d}{d\lambda}\EpiL}\Epi - \lr{\frac{d}{d\lambda}\Epi}\EpiL}{\Epi^2}\\
&= \frac{-n\EpiLL}{\Epi} + n \lr{\frac{\EpiL}{\Epi}}^2\\
&= -n \lr{\E_{\rho_\lambda}\lrs{\hat L(h,S)^2} - \lr{\EL}^2}\\
&=- n\V_{\rho_\lambda}\lrs{\hat L(h,S)} \leq 0.
\end{align*}
\[
g'(\lambda) = -\frac{(1 - \lambda/2 - \lambda/2)}{\lambda^2(1-\lambda/2)^2} = \frac{\lambda-1}{\lambda^2(1-\lambda/2)^2} \leq 0.
\]
\begin{align*}
g''(\lambda) &= \frac{\lambda^2(1-\lambda/2)^2 - (\lambda - 1) \lr{2 \lambda (1 - \lambda / 2)^2 - \lambda^2(1-\lambda/2)}}{\lambda^4(1-\lambda/2)^4}\\
&= \frac{\lambda(1-\lambda/2) - (\lambda - 1) \lr{2 (1 - \lambda / 2) - \lambda}}{\lambda^3(1-\lambda/2)^3}\\
&= \frac{\lambda(1-\lambda/2) - 2(\lambda - 1) (1 - \lambda)}{\lambda^3(1-\lambda/2)^3}\\
&= \frac{\lambda(1-\lambda/2) + 2(\lambda - 1)^2}{\lambda^3(1-\lambda/2)^3}\\
&= \frac{\lambda - \lambda^2/2 + 2 \lambda^2 - 4 \lambda + 2}{\lambda^3(1-\lambda/2)^3}\\
&= \frac{(3/2) \lambda^2 - 3 \lambda + 2}{\lambda^3(1-\lambda/2)^3}\\
&= \frac{3 \lambda^2 - 6 \lambda + 4}{2 \lambda^3(1-\lambda/2)^3}\\
&= \frac{3\lr{\lambda - 1}^2 + 1}{2 \lambda^3(1-\lambda/2)^3} > 0.
\end{align*}
At a stationary point we have $\F'(\lambda) = f'(\lambda)g(\lambda) + g'(\lambda)f(\lambda) = 0$. By using the identity
\begin{equation}
\label{eq:f}
f(\lambda) = \lambda \EL + \frac{\KL(\rho_\lambda\|\pi) + \tail}{n},
\end{equation}
which follows from \eqref{eq:KL}, this gives
\[
\F'(\lambda) = \frac{\EL}{\lambda(1-\lambda/2)} + \frac{(\lambda-1)\lr{\lambda \EL}}{\lambda^2(1-\lambda/2)^2} + \frac{(\lambda-1)\lr{\KL(\rho_\lambda\|\pi) + \tail}}{n \lambda^2 (1-\lambda/2)^2} = 0.
\]
This can be rewritten as
\[
\EL + \frac{(\lambda-1) \EL}{1-\lambda/2} + \frac{(\lambda-1)\lr{\KL(\rho_\lambda\|\pi) + \tail}}{n \lambda (1-\lambda/2)} = 0,
\]
\[
\frac{1}{2} \lambda \EL = \frac{(1-\lambda)\lr{\KL(\rho_\lambda\|\pi) + \tail}}{n\lambda},
\]
\[
\frac{\KL(\rho_\lambda\|\pi) + \tail}{n} = \frac{\lambda^2 \EL}{2(1-\lambda)},
\]
which characterizes the stationary points. By combining this with the identity \eqref{eq:f} we obtain that at a stationary point
\[
f(\lambda) = \lr{\lambda + \frac{\lambda^2}{2(1-\lambda)}}\EL = \frac{\lambda (1-\lambda/2)}{1-\lambda}\EL.
\]

For the second derivative we have $\F''(\lambda) = f''(\lambda) g(\lambda) + 2 f'(\lambda) g'(\lambda) + g''(\lambda) f(\lambda)$. At a stationary point
\begin{align*}
g''(\lambda)f(\lambda)+ 2f'(\lambda)g'(\lambda) &=\lr{\frac{3\lambda^2 - 6\lambda + 4}{2\lambda^3(1-\lambda/2)^3} \frac{\lambda(1-\lambda/2)}{1-\lambda} - \frac{2(1 - \lambda)}{\lambda^2 (1 - \lambda/2)^2}} \EL\\
&= \frac{\EL}{\lambda (1-\lambda/2)(1-\lambda)}.
\end{align*}
By plugging this into $\F''(\lambda)$ we obtain that at a stationary point (if such exists)
\begin{align*}
\F''(\lambda) &= \frac{-n\VL}{\lambda(1-\lambda/2)} + \frac{\EL}{\lambda(1-\lambda/2)(1-\lambda)}\\
&= \frac{1}{\lambda(1-\lambda/2)}\lr{\frac{\EL}{1-\lambda} - n \VL}.
\end{align*}
This expression is positive if
\[
\EL > (1-\lambda) n \VL
\]
or, equivalently (by characterization of a stationary point),
\[
2\KL(\rho_\lambda\|\pi) + \ln \frac{4n}{\delta^2} > \lambda^2 n^2 \VL.
\]

\section{Relaxation of the Sufficient Conditions in Theorem~\ref{thm:var}}
\label{app:thm-var}

In this section we propose a couple of relaxations of the conditions in Theorem~\ref{thm:var}. The first provides a possibility of tuning the intervals $\IA$, $\IB$, and $\IC$, and the second provides a slight improvement in the definition of $b$.

\subsection{Tuning the intervals $\IA$, $\IB$, and $\IC$}
\label{app:thm-var-tune}

We recall the definition $x_h = \hat L(h,S) - \min_h \hat L(h,S)$. In Theorem~\ref{thm:var} we have tuned $a$ and $b$ so that the contribution to $\E_{\rho_\lambda}\lrs{x_h}$ from hypotheses falling into intervals $\IA$, $\IB$, and $\IC$ is equal. Obviously, this does not have to be the case. Take $\alpha \geq 0$ and $\beta \geq 0$, such that $\alpha + \beta \leq 1$ and define 
\begin{equation}
a(\alpha) = \frac{\sqrt{\alpha \ln \frac{4n}{\delta^2}}}{n}~~~~~\text{,}~~~~~b(\beta) = \frac{\ln \lr{\frac{1}{\beta} m n^2}}{\sqrt{n\tail}}~~~~~\text{,}~~~~~K(\alpha,\beta) = \frac{e^2(1 - \alpha - \beta)}{4}\ln \frac{4n}{\delta^2}.
\label{eq:Kab}
\end{equation}
If we can find any pair $(\alpha, \beta)$, such that the number of hypotheses for which $x_h \in (a(\alpha), b(\beta))$ is at most $K(\alpha,\beta)$ then $\VL \leq \frac{\ln\frac{4n}{\delta^2}}{\lambda^2 n^2}$ for all $\lambda \in \lrs{\sqrt{\frac{\tail}{n}}, 1}$ and $\F(\lambda)$ is strongly quasiconvex. The proof is identical to the proof of Theorem~\ref{thm:var} with $\alpha$, $(1 - \alpha - \beta)$, and $\beta$ being the relative contributions to $\E_{\rho_\lambda}\lrs{x_h}$ from the intervals $\IA$, $\IB$, and $\IC$, respectively.

\subsection{Refinement of the Boundary $b$}
\label{app:thm-var-b}

In the derivation in \eqref{eq:b} we have dropped the factor $b^2$. If we would have kept it we could reduce the value of $b$. Let 
\[
b = \frac{\ln\lr{3mn \frac{4 \lr{\ln(3mn)}^2}{\ln \frac{2\sqrt n}{\delta} \ln \frac{4n}{\delta^2}}}}{\sqrt{n \ln \frac{2 \sqrt n}{\delta}}} = \frac{\ln \lr{\frac{12 mn \lr{\ln(3mn)}^2}{\ln \frac{2\sqrt n}{\delta} \ln \frac{4n}{\delta^2}}}}{\sqrt{n \ln \frac{2 \sqrt n}{\delta}}}.
\]
Assuming that $\ln \lr{\frac{12 mn \lr{\ln(3mn)}^2}{\ln \frac{2\sqrt n}{\delta} \ln \frac{4n}{\delta^2}}} \geq 2$ and $n \geq 5$ we have
\begin{align}
\sum_{x_h \in \IC} x_h^2 e^{-n\lambda x_h} &\leq m b^2 e^{-n\lambda b}\notag\\
&= m \frac{\lr{\ln(3mn) + \ln\lr{\frac{4 \lr{\ln(3mn)}^2}{\ln \frac{2 \sqrt n}{\delta} \ln \frac{4n}{\delta^2}}}}^2}{n \ln \frac{2\sqrt n}{\delta}} \frac{\ln \frac{2 \sqrt n}{\delta}\ln\frac{4n}{\delta^2}}{3mn \times 4 \lr{\ln(3mn)}^2}\notag\\
&= \frac{\ln\frac{4n}{\delta^2}}{3n^2} \frac{\lr{\ln(3mn) + \ln\lr{\frac{4 \lr{\ln(3mn)}^2}{\ln \frac{2 \sqrt n}{\delta} \ln \frac{4n}{\delta^2}}}}^2}{4\lr{\ln(3mn)}^2}\label{eq:premonster}\\
&\leq \frac{\ln\frac{4n}{\delta^2}}{3n^2}\label{eq:monster}\\
&\leq \frac{\ln\frac{4n}{\delta^2}}{3\lambda^2n^2}.\notag
\end{align}
In step \eqref{eq:monster} we used the following auxiliary calculations. For $n \geq 5$ we have $\ln \frac{2 \sqrt n}{\delta} \ln \frac{4n}{\delta^2} \geq 4$. From here, for $n \geq 5$ we have $\frac{4 \lr{\ln(3mn)}^2}{\ln \frac{2 \sqrt n}{\delta} \ln \frac{4n}{\delta^2}} \leq \lr{\ln(3mn)}^2$. Furthermore, for $x \geq 0.5$ we have $x \geq \lr{\ln x}^2$, leading to $\ln(3mn) + \ln\lr{\lr{\ln(3mn)}^2} \leq 2 \ln(3mn)$, since $3mn \geq 0.5$. Thus, the second fraction in line \eqref{eq:premonster} is bounded by 1.

\subsection{Combining the two improvements}

It is obviously possible to combine the two improvements by defining 
\[
b(\beta) = \frac{\ln \lr{\frac{\frac{4mn}{\beta}  \lr{\ln\frac{mn}{\beta}}^2}{\ln \frac{2\sqrt n}{\delta} \ln \frac{4n}{\delta^2}}}}{\sqrt{n \ln \frac{2 \sqrt n}{\delta}}}
\]
and $a(\alpha)$ and $K(\alpha,\beta)$ as before. We only have to check that the conditions $\ln \lr{\frac{\frac{4mn}{\beta} \lr{\ln\frac{mn}{\beta}}^2}{\ln \frac{2\sqrt n}{\delta} \ln \frac{4n}{\delta^2}}} \geq 2$ and $\frac{mn}{\beta} \geq 0.5$ are satisfied and, otherwise, tune further. Note that since $\hat L(h,S)$ is trivially upper bounded by 1, $b(\beta) > 1$ is vacuous.
 
\section{A Proof Sketch of Theorem~\ref{thm:PBaggregation}}
\label{app:PBaggregation}

In this section we provide a sketch of a proof of Theorem~\ref{thm:PBaggregation}. The proof is a straightforward adaptation of the proof of Theorem~\ref{thm:PBlambda}.
\begin{proof}
As we have already mentioned in the text, since the validation errors are $(n-r)$ i.i.d.\ random variables with bias $L(h)$, for $f(h,S) = (n-r) \kl(\Lval(h,S)\|L(h))$ we have $\E_S\lrs{e^{f(h,S)}} \leq 2 \sqrt{n-r}$ \cite{Mau04}. With this result replacing $f(h,S) = n \kl(\hat L(h,S)\|L(h))$ and $\E_S\lrs{e^{f(h,S)}} \leq 2 \sqrt{n}$ in the proof of Theorem~\ref{thm:PBkl} it is straightforward to obtain an analogue of Theorem~\ref{thm:PBkl}. Namely, that for any probability distribution $\pi$ over ${\cal H}$ that is independent of $S$ and any $\delta \in (0,1)$, with probability greater than $1-\delta$ over a random draw of a sample $S$, for all distributions $\rho$ over ${\cal H}$ simultaneously
\begin{equation}
\kl\lr{\E_\rho\lrs{\Lval(h,S)}\middle\|\E_\rho\lrs{L(h)}} \leq \frac{\KL(\rho\|\pi) + \ln \frac{2 \sqrt{n-r}}{\delta}}{n-r}.
\end{equation}
And from here, directly following the steps in the proof of Theorem~\ref{thm:PBlambda}, we obtain Theorem~\ref{thm:PBaggregation}.
\end{proof}

\begin{figure*}
\centering
\subfigure[Adult dataset. $L_{CV} = 0.15$]{
\includegraphics[width=0.41\textwidth]{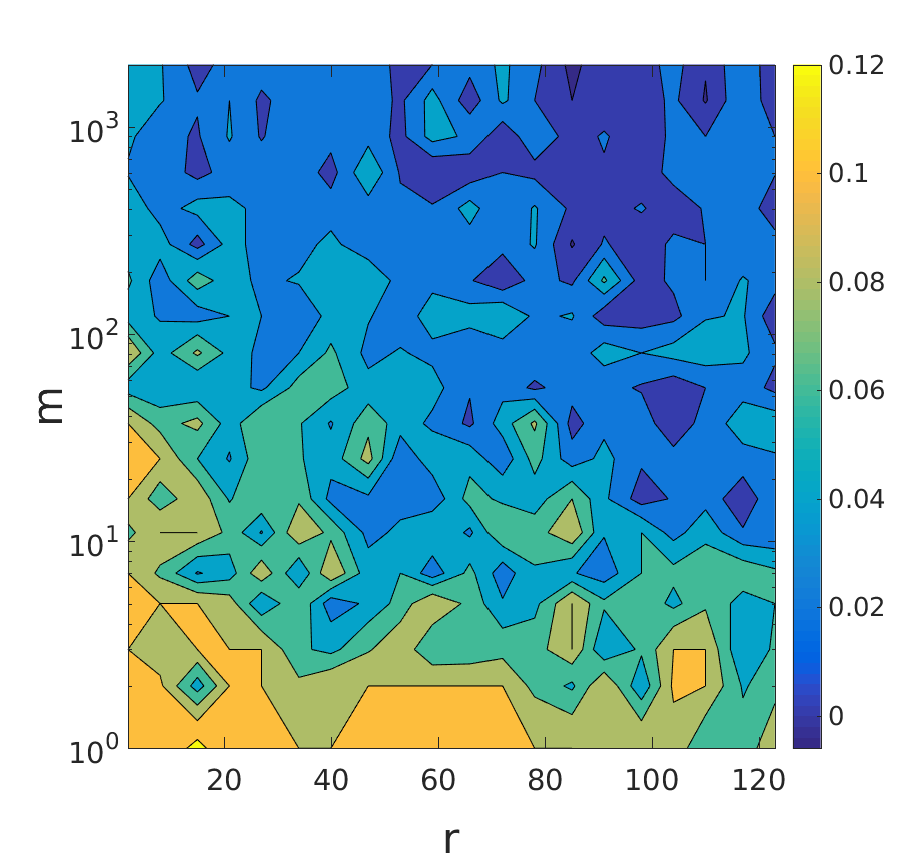}
}%
\qquad\qquad
\subfigure[Breast cancer dataset. $L_{CV} = 0.05$]{
\includegraphics[width=0.41\textwidth]{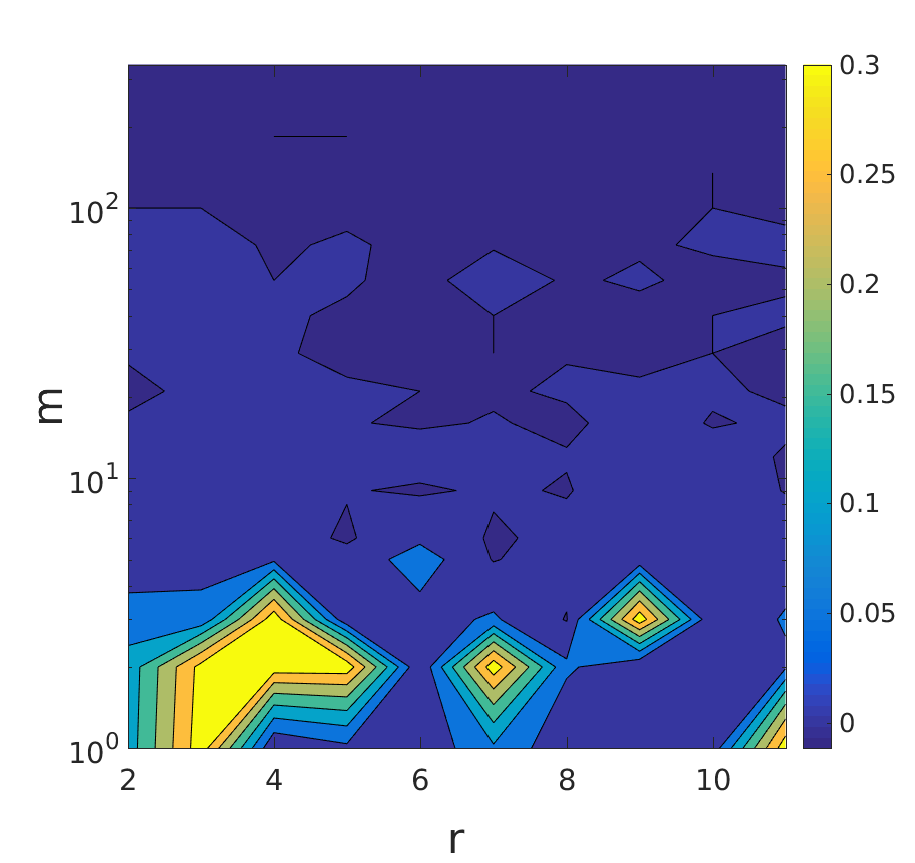}
}
\subfigure[Haberman dataset. $L_{CV} = 0.26$]{
\includegraphics[width=0.41\textwidth]{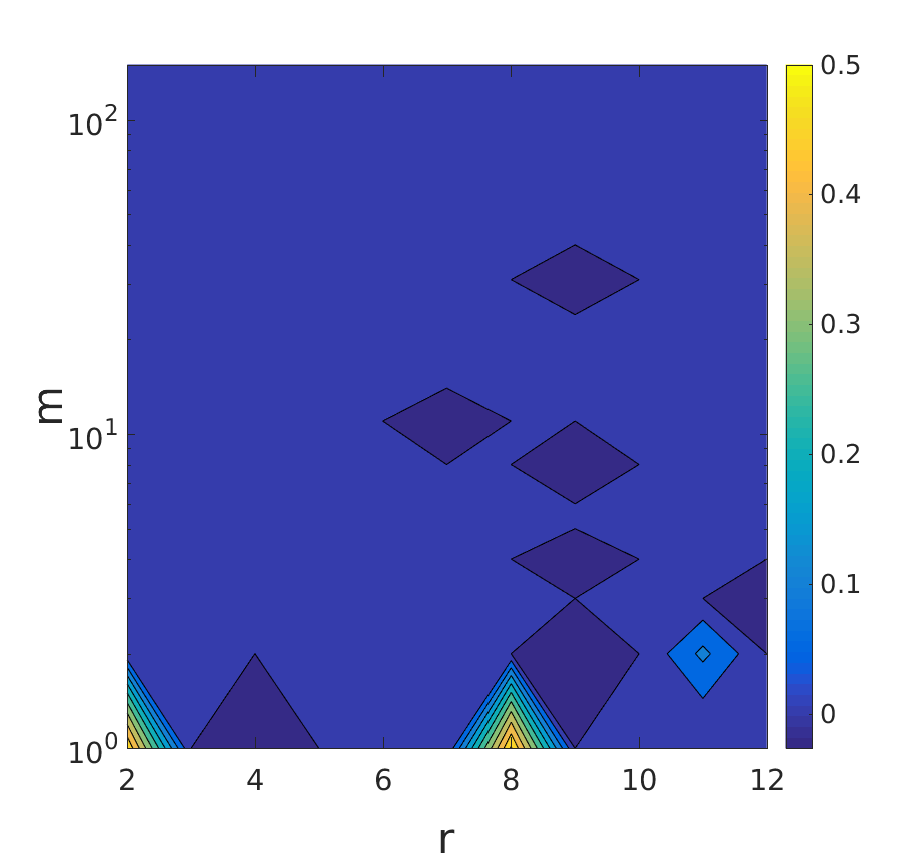}
}%
\qquad\qquad
\subfigure[AvsB dataset. $L_{CV} = 0$]{
\includegraphics[width=0.41\textwidth]{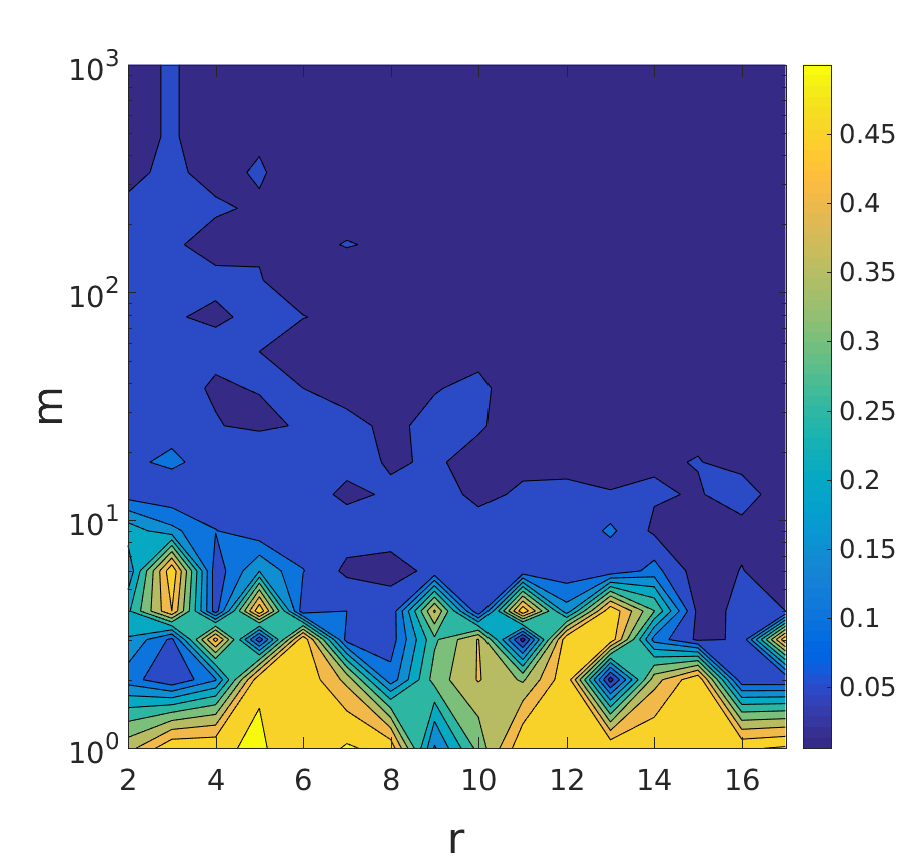}
}
\subfigure[Skin dataset. $L_{CV} = 0$]{
\includegraphics[width=0.41\textwidth]{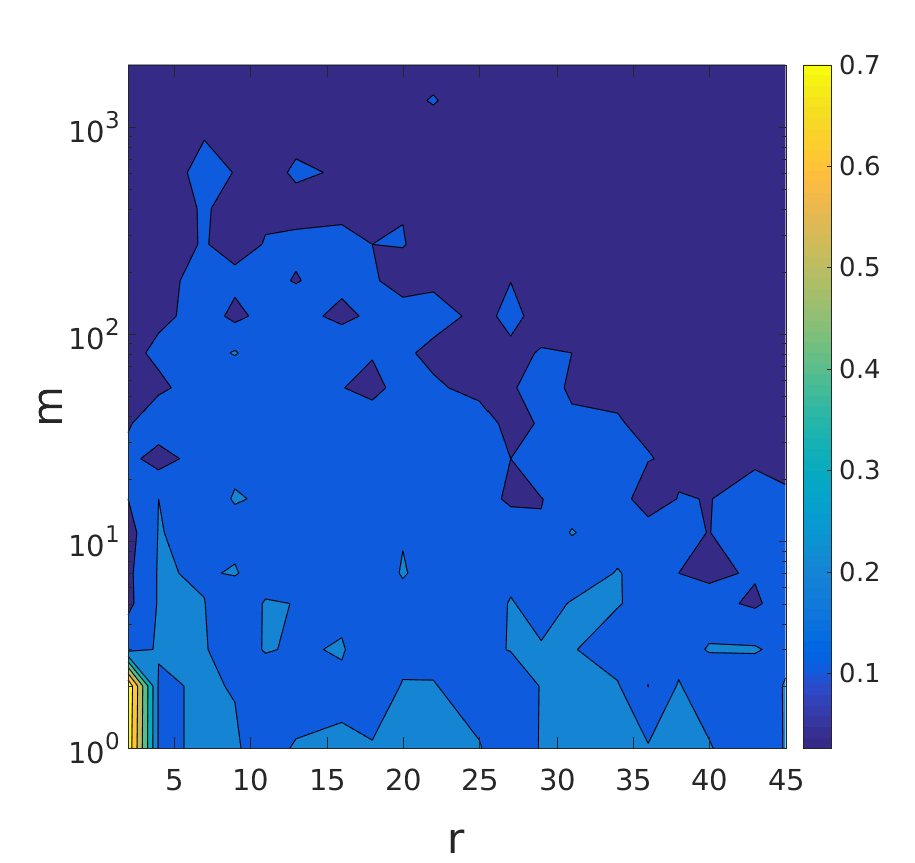}
}%
\qquad\qquad
\subfigure[Waveform dataset. $L_{CV} = 0.06$]{
\includegraphics[width=0.41\textwidth]{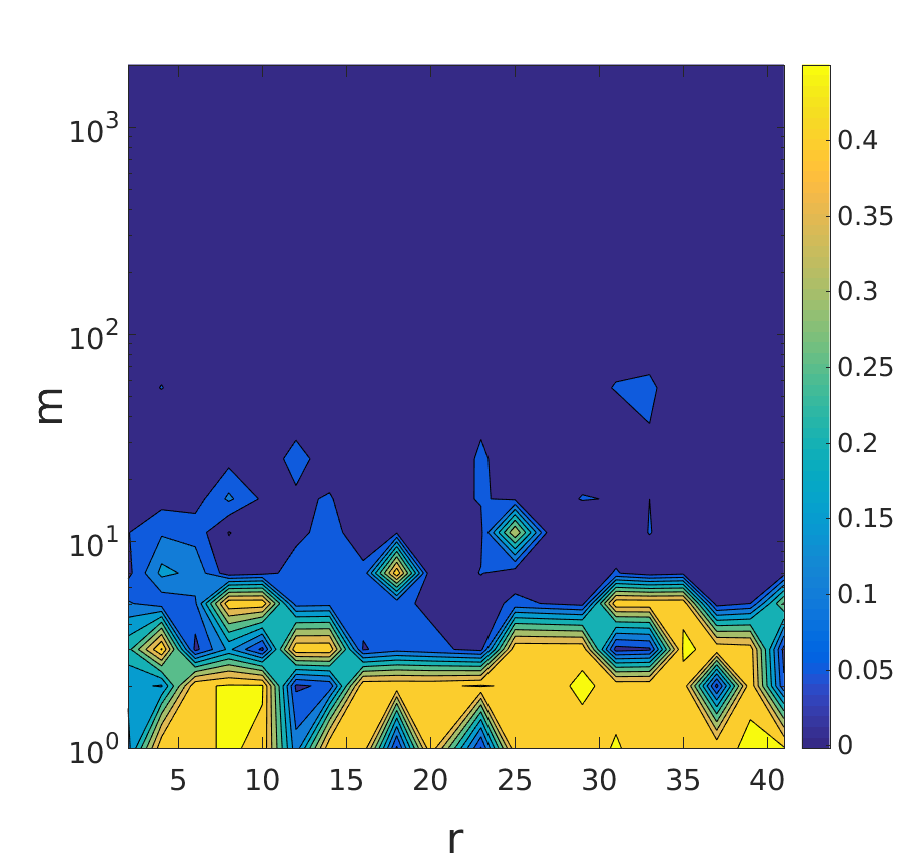}
}
\caption{\textbf{Prediction accuracy of PAC-Bayesian aggregation vs.\ cross-validated SVM across different values of $m$ and $r$.} The colors of the heatmap represent the difference between the zero-one loss of the $\rho$-weighted majority vote and the zero-one loss of the cross-validated SVM. The loss of the cross-validated SVM is given by $\LCV$ in the caption.}
\label{fig:appheatmap}
\end{figure*}

\section{Additional Experiments}
\label{app:experiments}

In this section we present figures with experimental results for UCI datasets that could not be included in the body of the paper. We also present three additional experiments.

\subsection{Additional Figures for the Main Experiments}
\label{app:more-figures}

We present the outcomes of experiments in Section \ref{sec:experiments} for additional UCI datasets. Since the \verb|skin| and \verb|Haberman| datasets have low dimensionality ($d = 3$) we use $r = \sqrt{n}$ rather than $r = d + 1$, to get a reasonable subsample size. Figure~\ref{fig:appheatmap} continues the plots in Figure~\ref{fig:heatmap_kernel}, and Figure~\ref{fig:appruntime} continues the plots in Figure~\ref{fig:runtime_kernel}.

\subsection{Comparison with Uniform Weighting and Best Performing Classifier}

In Figures \ref{fig:appprediction} and \ref{fig:appprediction_max} we compare the prediction accuracy of $\rho$-weighted majority vote with uniformly weighted majority vote, which is popular in ensemble learning \cite{CBB02,VD03, CSSM14}. As a baseline the prediction accuracy of a cross-validated SVM is also shown. For the two datasets in Figure \ref{fig:appprediction_max} we also include the prediction accuracy of SVM corresponding to the maximum value of $\rho$ (which is the best performing SVM in the set). Due to significant overlap with the weighted majority vote, the latter graph is omitted for the datasets in Figure \ref{fig:appprediction}. Overall, in our setting the accuracy of $\rho$-weighted majority vote is comparable to the accuracy of the best classifier in the set and significantly better than uniform weighting.

\begin{figure*}
\centering
\subfigure[Adult dataset. $n = 2000$, $r = d+1 = 123$.]{
\includegraphics[width=0.5\textwidth]{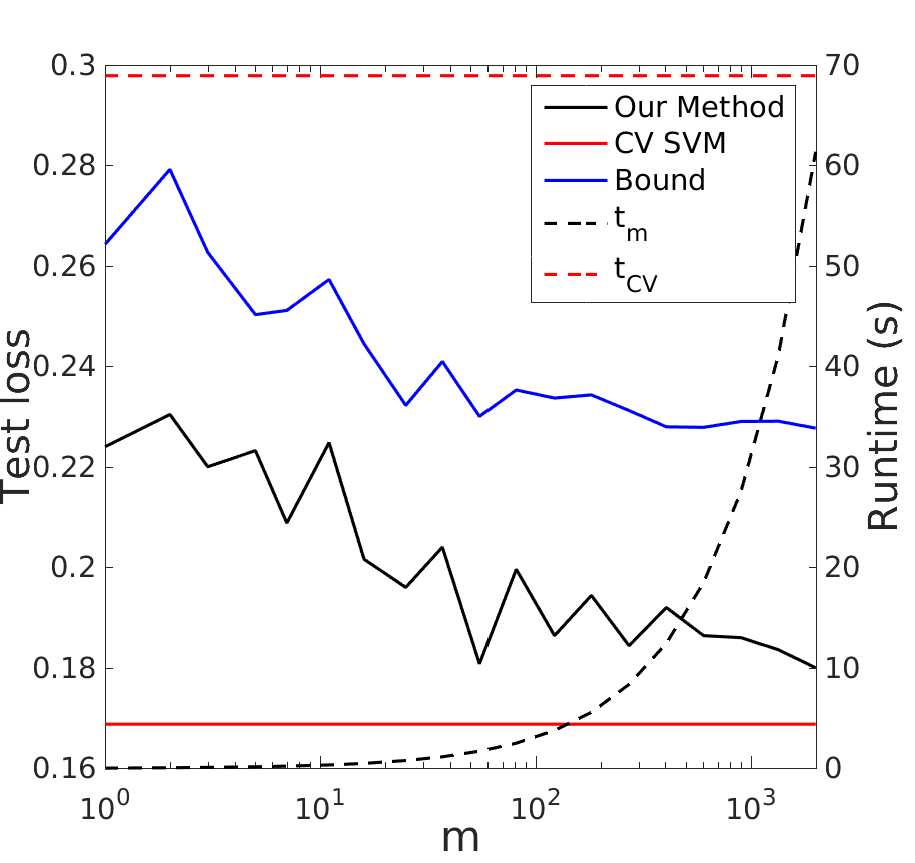}
}%
\subfigure[Haberman dataset. $n = 150$, $r = \sqrt n = 12$.]{
\includegraphics[width=0.5\textwidth]{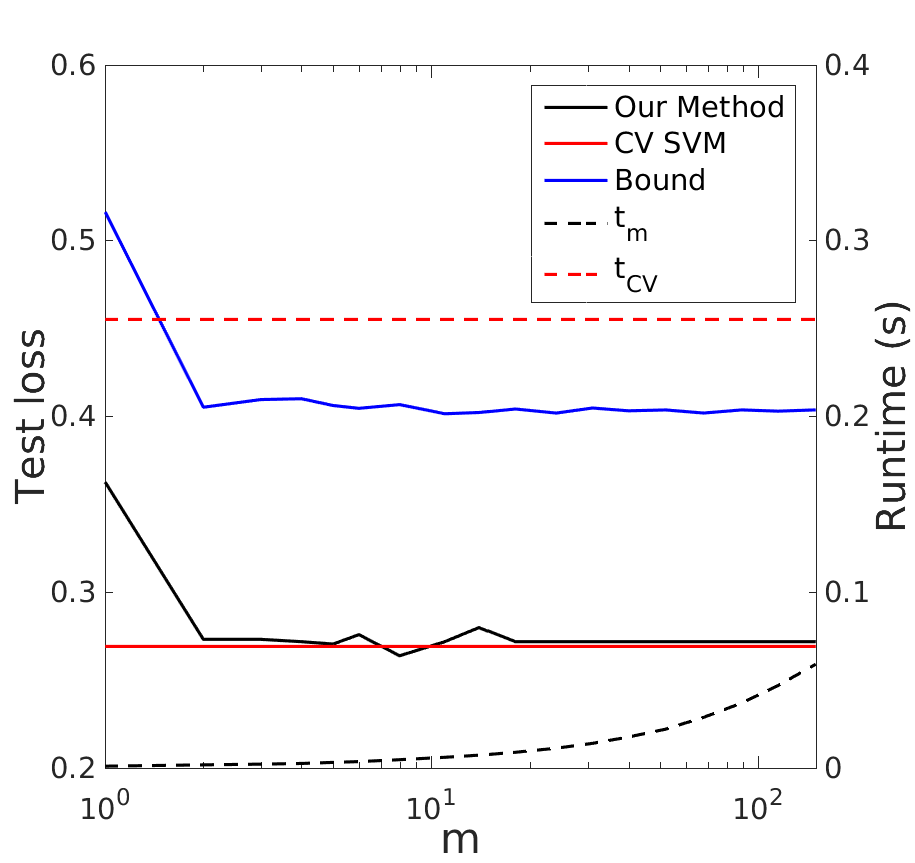}
}
\subfigure[Skin dataset. $n = 2000$, $r = \sqrt n = 45$.]{
\includegraphics[width=0.5\textwidth]{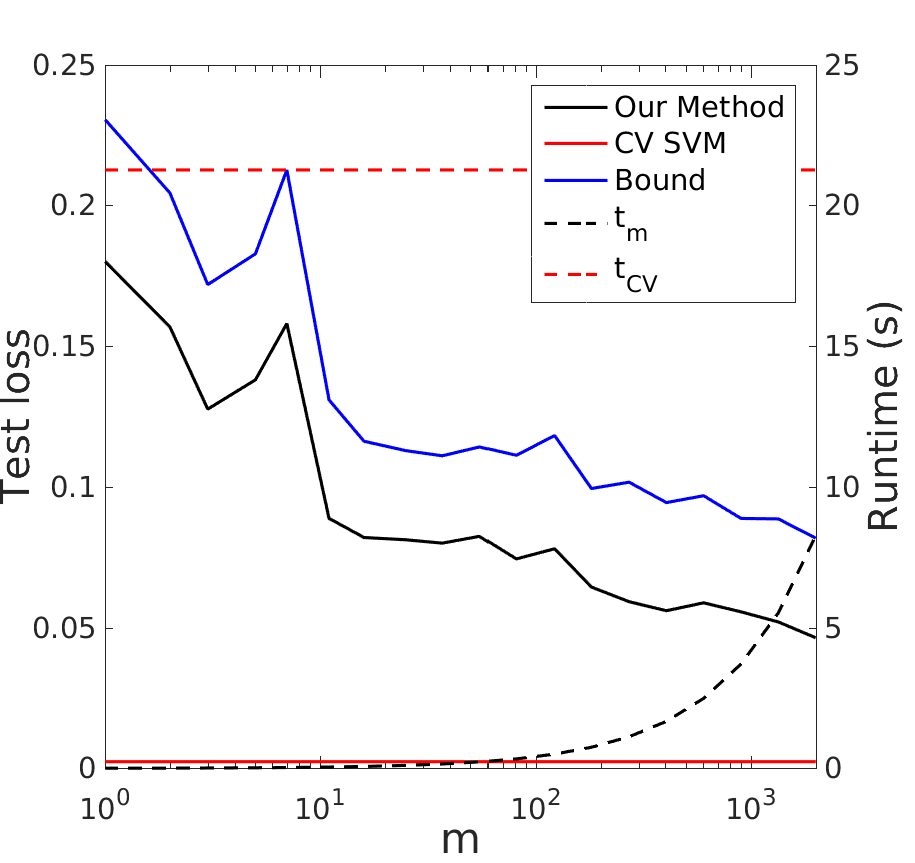}
}%
\subfigure[Mushrooms dataset. $n = 2000$,\newline$r = d+1 = 113$.]{
\includegraphics[width=0.5\textwidth]{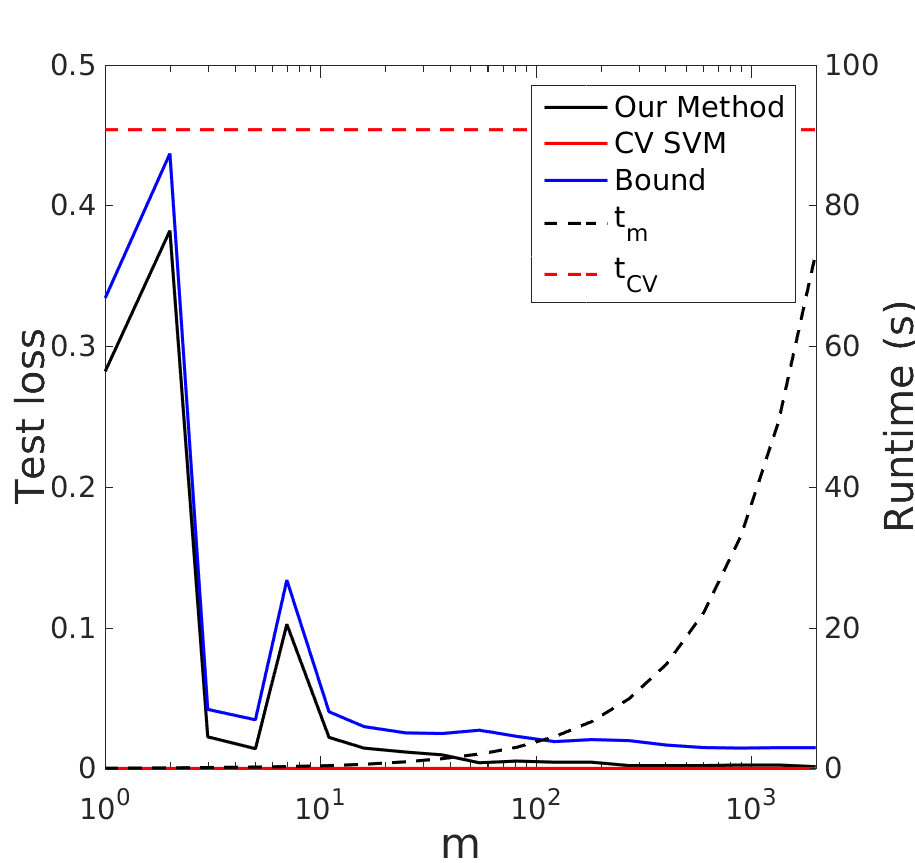}
}
\caption{\textbf{Comparison of PAC-Bayesian aggregation with RBF kernel SVM tuned by cross-validation.} The solid red, black, and blue lines correspond, respectively, to the zero-one test loss of the cross-validated SVM, the loss of $\rho$-weighted majority vote, where $\rho$ is a result of minimization of the PAC-Bayes-$\lambda$ bound, and PAC-Bayes-kl bound on the loss of randomized classifier defined by $\rho$. The dashed black line represents the training time of PAC-Bayesian aggregation, while the red dashed line represents the training time of cross-validated SVM. The prediction accuracy and run time of PAC-Bayesian aggregation are given as functions of the hypothesis set size $m$.}
\label{fig:appruntime}
\end{figure*}

\begin{figure*}
\centering
\subfigure[Breast cancer dataset.\newline$n = 340$, $r = d + 1 = 11$.]{
\includegraphics[width=0.37\textwidth]{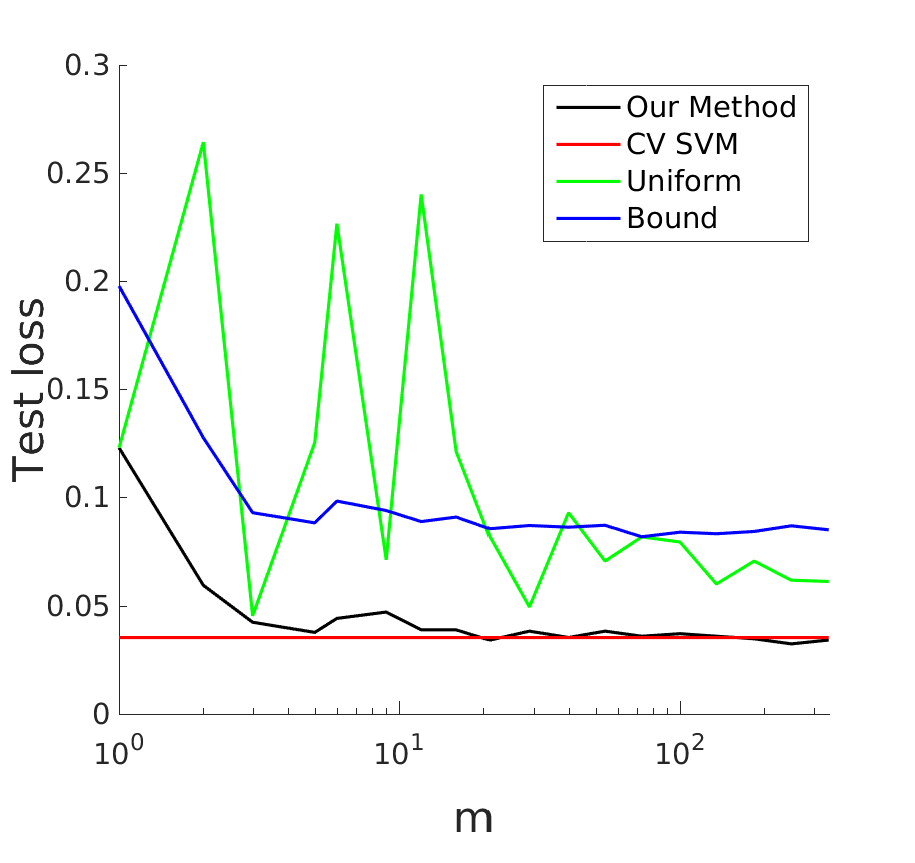}
}%
\qquad\qquad
\subfigure[Haberman dataset. $n = 150$, $r = \sqrt n = 12$.]{
\includegraphics[width=0.37\textwidth]{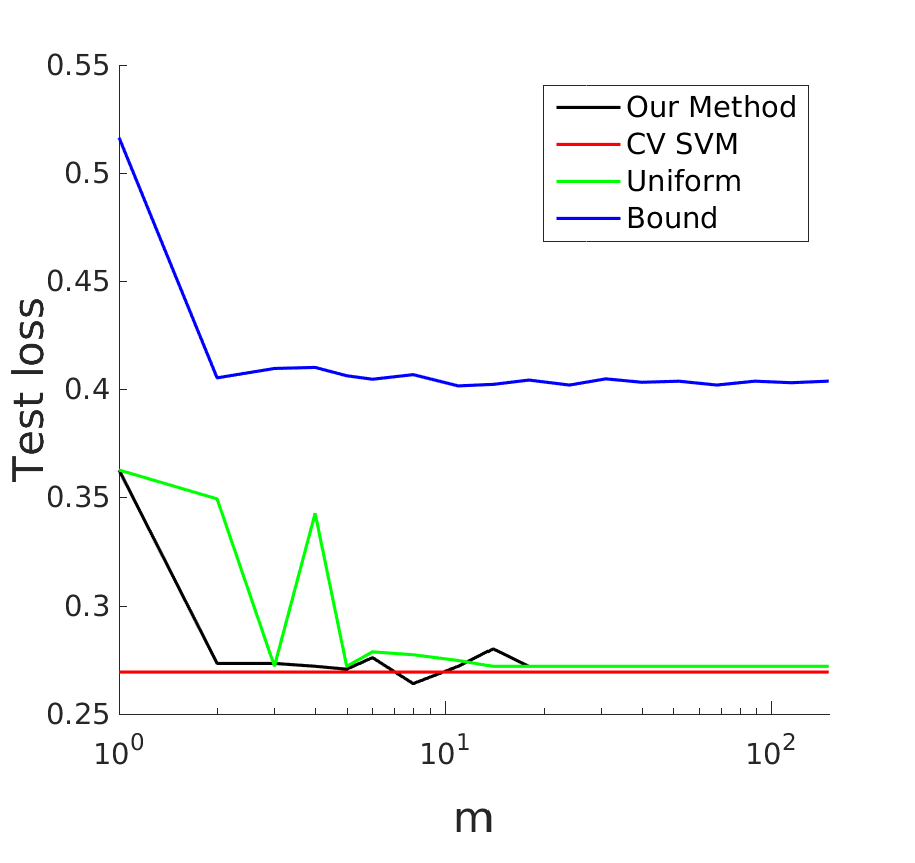}
}
\subfigure[AvsB dataset. $n = 1000$,\newline$r = d+1 = 17$.]{
\includegraphics[width=0.37\textwidth]{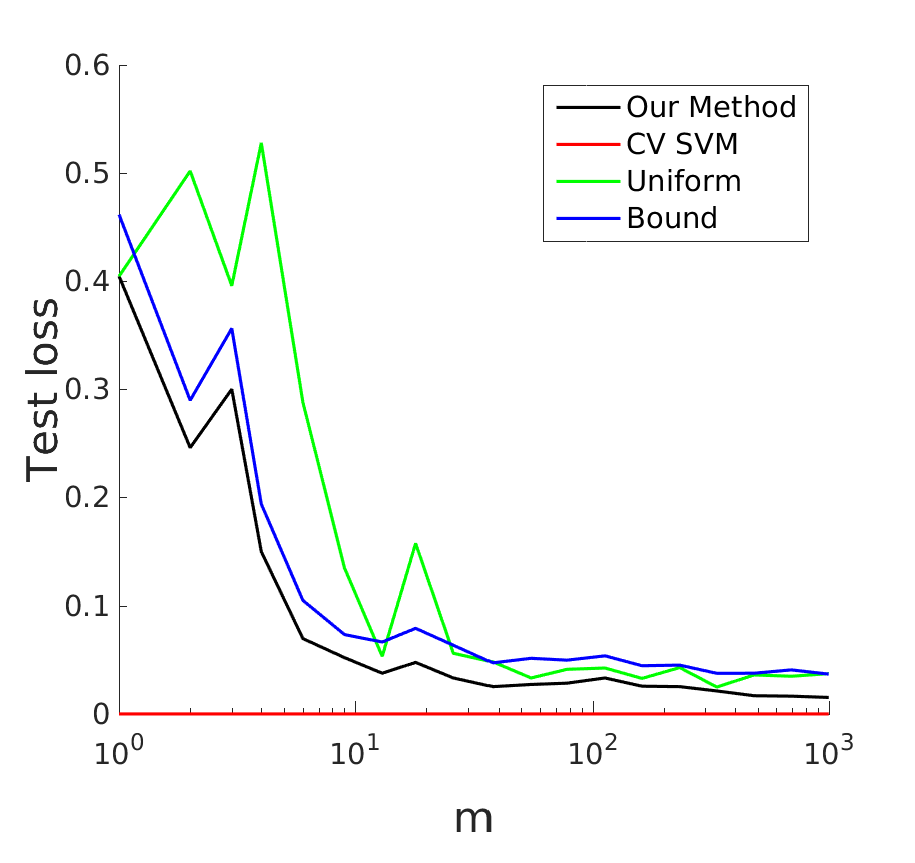}%
}%
\qquad\qquad
\subfigure[Waveform dataset. $n = 2000$, $r = d+1 = 41$.]{
\includegraphics[width=0.37\textwidth]{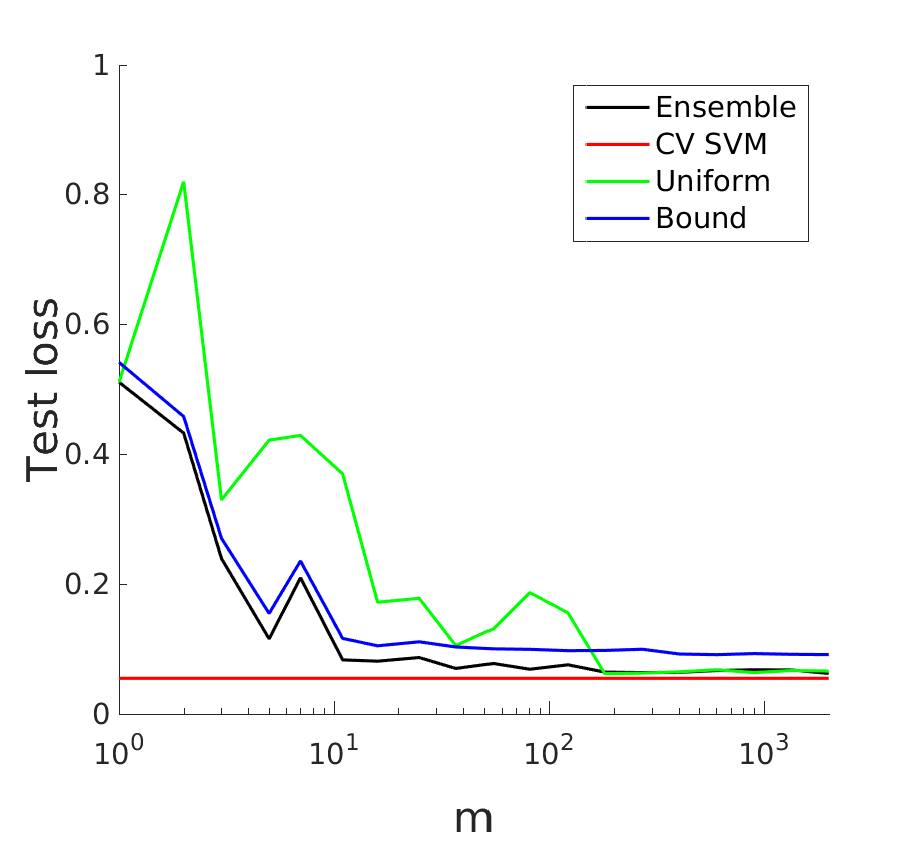}
}
\subfigure[Skin dataset. $n = 2000$,\newline$r = \sqrt n = 45$.]{
\includegraphics[width=0.37\textwidth]{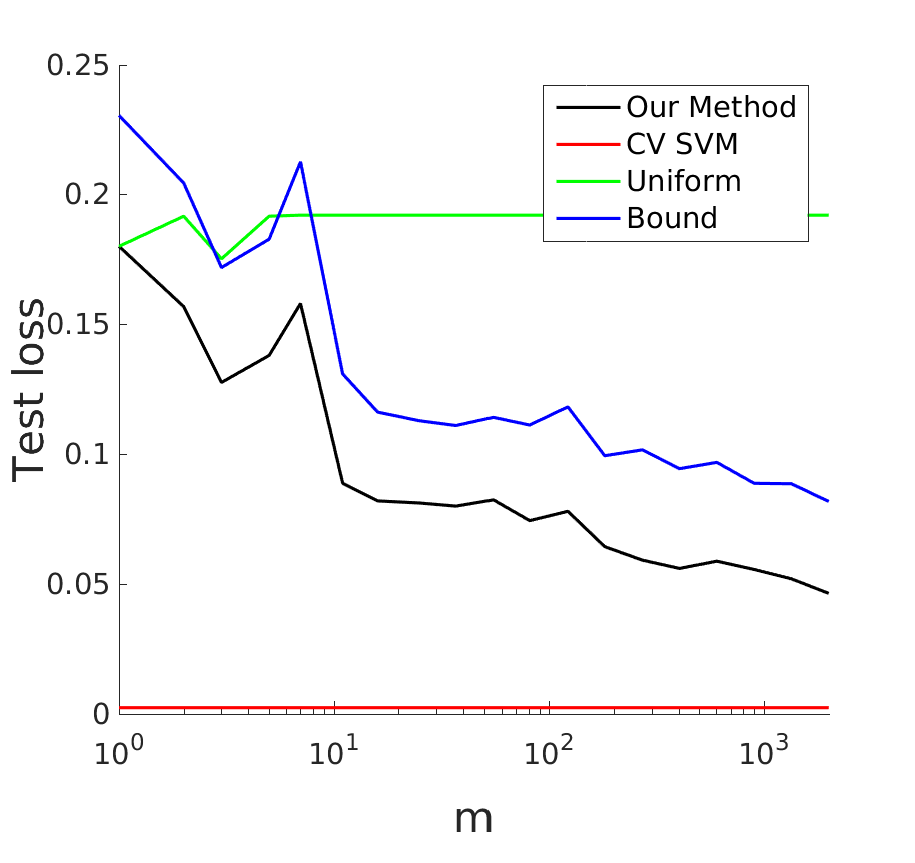}
}%
\qquad\qquad
\subfigure[Mushrooms dataset. $n = 2000$, $r = d+1 = 113$.]{
\includegraphics[width=0.37\textwidth]{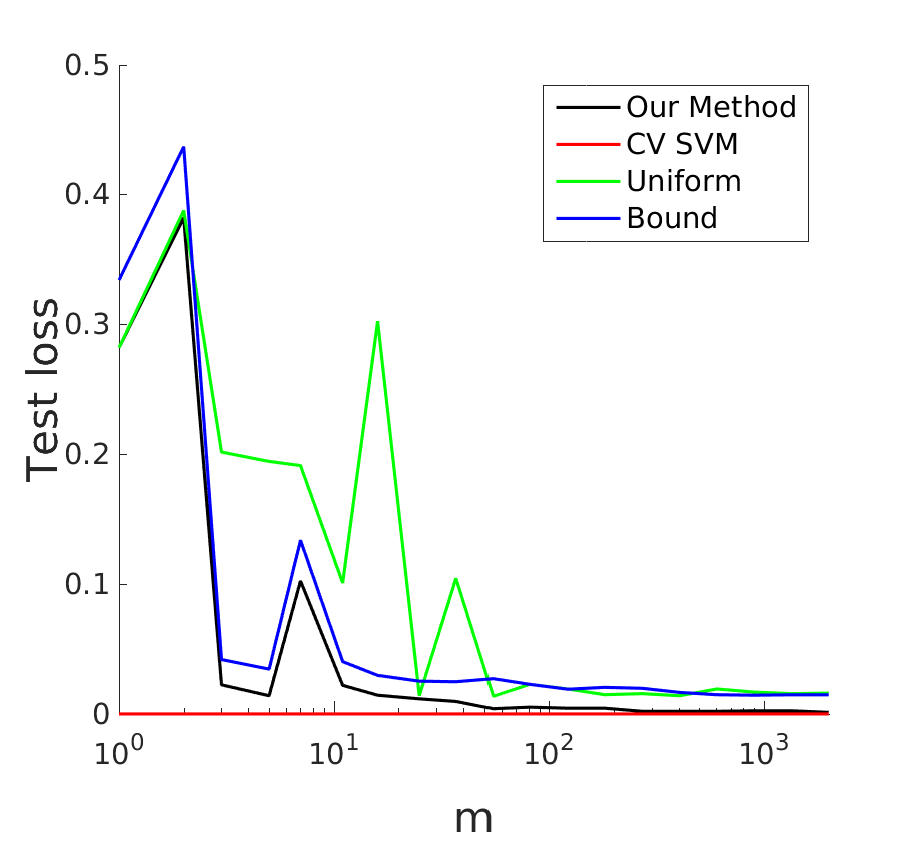}
}
\caption{\textbf{Prediction performance of $\rho$-weighted majority vote (``Our Method''), uniform majority vote (``Uniform''), and cross-validated SVM (``CV SVM'') together with the PAC-Bayes kl bound (``Bound'')}.}
\label{fig:appprediction}
\end{figure*}

\begin{figure*}
\centering
\subfigure[Ionosphere dataset. $n = 200$, $r = d+1 = 35$.]{
\includegraphics[width=0.5\textwidth]{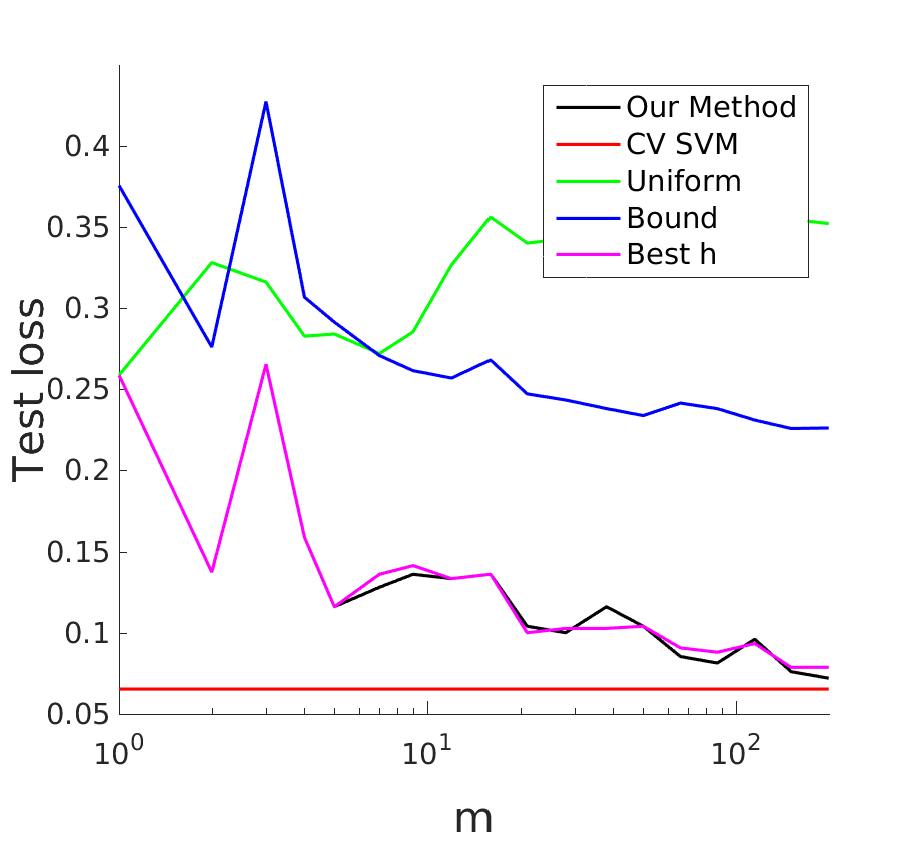}
}%
\subfigure[Adult dataset. $n = 2000$, $r = d+1 = 123$.]{
\includegraphics[width=0.5\textwidth]{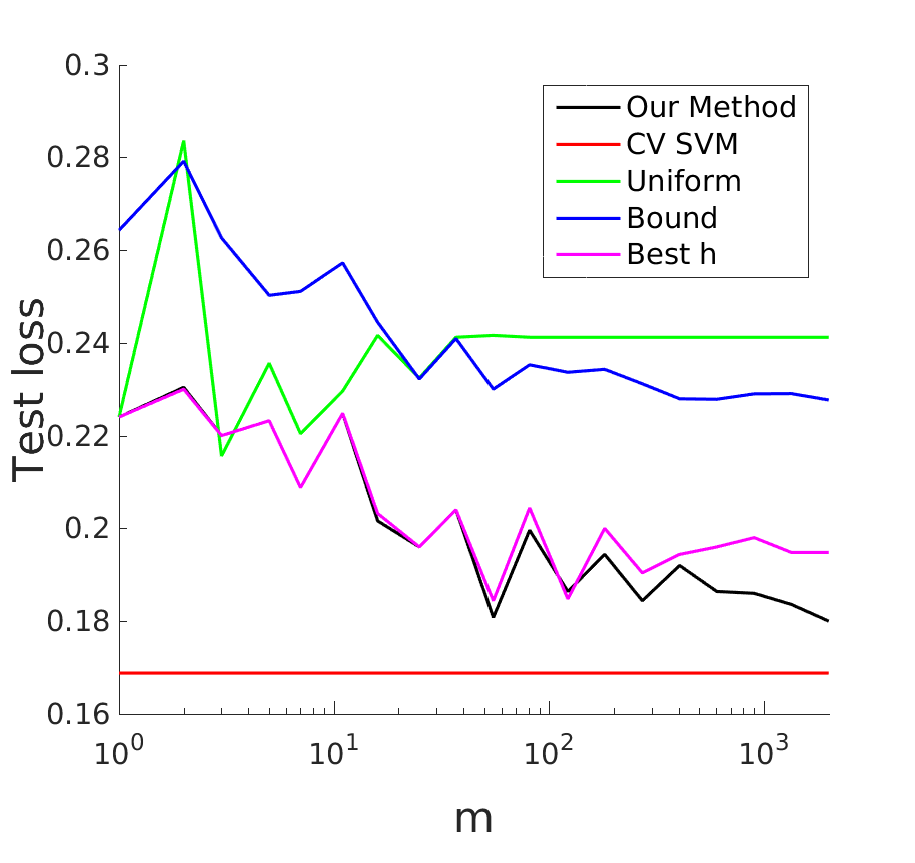}
}
\caption{\textbf{Prediction performance of weighted majority vote, uniform majority vote, SVM corresponding to the maximum of $\rho$ (Best $h$), and cross-validated SVM together with the PAC-Bayes-kl bound}.}
\label{fig:appprediction_max}
\end{figure*}

\subsection{Comparison of the Alternating Minimization with Grid Search for Selection of $\lambda$}

\begin{table}[t]
\centering
\footnotesize
\setlength{\tabcolsep}{.3em}
\begin{tabular}{|l|c|c|c|}
\hline
\textbf{Name} & $|\text{S}|$ & $|\text{V}|$ & $|\text{T}|$\\ \hline
Mushrooms & 2000 & 500 & 1000\\ \hline
Skin & 2000 & 500 & 1000 \\ \hline
Waveform & 2000 & 600 & 708 \\ \hline
Adult & 2000 & 500 & 685 \\ \hline
Ionosphere & 150 & 75 & 126 \\ \hline
AvsB & 700 & 500 & 355\\ \hline
Haberman & 150 & 50 & 106\\ \hline
Breast cancer & 300 & 100 & 283\\ \hline
\end{tabular}
\caption{\textbf{Sizes of dataset partitions used in Figure \ref{fig:direct_vs_validate}}. $|\text{S}|$ refers to the size of the training set, $|\text{V}|$ refers to the size of the validation set, and $|\text{T}|$ refers to the size of the test set.}
\label{tbl:uci_val}
\end{table}

In this section we present a comparison between direct minimization of the PAC-Bayes-$\lambda$ bound and selection of $\lambda$ from a grid using a validation set. Table \ref{tbl:uci_val} shows how each dataset is partitioned into training, validation, and test sets. The grid of $\lambda$-s was constructed by taking nine evenly spaced values in $[0.05; 1.9]$. For each $\lambda$ we evaluated on the validation set the performance of the majority vote weighted by the distribution $\rho(\lambda)$ defined in equation \eqref{eq:rho}, and picked the one with the lowest validation error. Note that the grid search had access to additional validation data that was unavailable to the alternating minimization procedure. Figure \ref{fig:direct_vs_validate} presents the results. We conclude that the bound minimization performed comparably to validation in our experiments.

\begin{figure*}
\centering
\includegraphics[width=\textwidth]{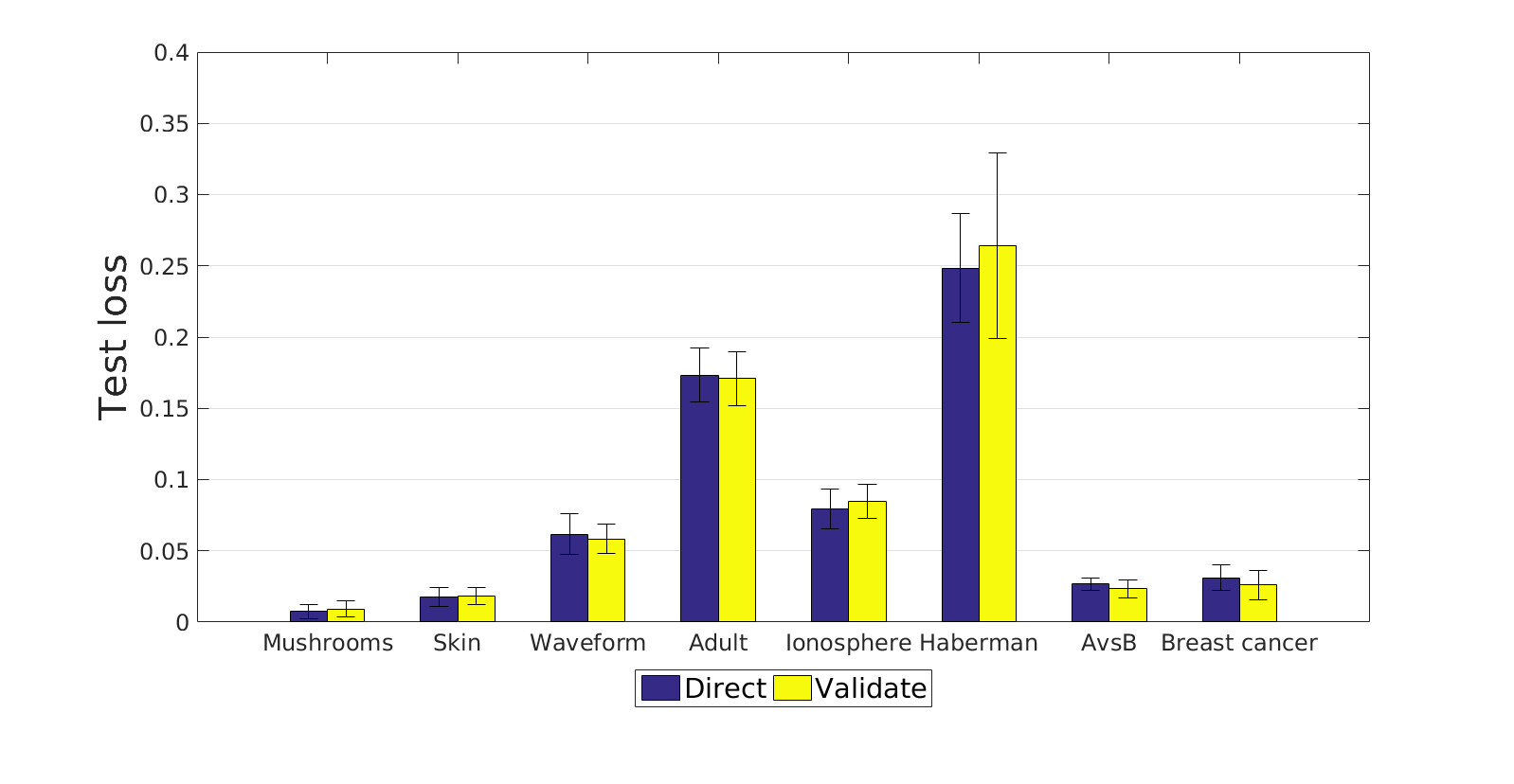}
\caption{\textbf{Comparison of the Alternating Minimization with Grid Search for Selection of $\lambda$.} We show the loss on the test set obtained by direct minimization of $\lambda$ (``Direct'') and grid search (``Validate''). Error bars correspond to one standard deviation over 5 splits of the data into training, validation, and test set.}
\label{fig:direct_vs_validate}
\end{figure*}

\subsection{Comparison of $\rho$-weighted Majority Vote with Randomized Classifier and Empirically Best Classifier}
\label{app:majority}

In this section we compare the performance of $\rho$-weighted majority vote with the performance of randomized classifier defined by $\rho$ and the performance of the best out of $m$ weak classifiers (measured by the validation loss). The comparison is provided in Figure~\ref{fig:mjv-vs-rnd}. Furthermore, in Table~\ref{tbl:50-pcnt} we provide the number of hypotheses that took up 50\% of the posterior mass $\rho$. While the performance of the randomized classifier is close to the performance of the best weak classifier, the distribution of posterior mass $\rho$ over several classifiers improves the generalization bound and reduces the risk of overfitting when $m$ is large. In other words, randomized classifier makes learning with large $m$ safer. In our experiments the majority vote provided slight, but not significant improvement over the randomized classifier.

\begin{table}[t]
\centering
\footnotesize
\setlength{\tabcolsep}{.3em}
\begin{tabular}{|l|c|}
\hline
\textbf{Name} & \#($h$) that make 50\% of $\rho$-mass\\ \hline
Mushrooms & 2\\ \hline
Skin & 1 \\ \hline
Waveform & 3 \\ \hline
Adult & 4 \\ \hline
Ionosphere & 2 \\ \hline
Haberman & 26\\ \hline
AvsB & 2\\ \hline
Breast cancer & 12\\ \hline
\end{tabular}
\caption{\textbf{The number of hypotheses that make up 50\% of the posterior mass $\rho$.}}
\label{tbl:50-pcnt}
\end{table}

\begin{figure*}[t]
\centering
\includegraphics[width=\textwidth]{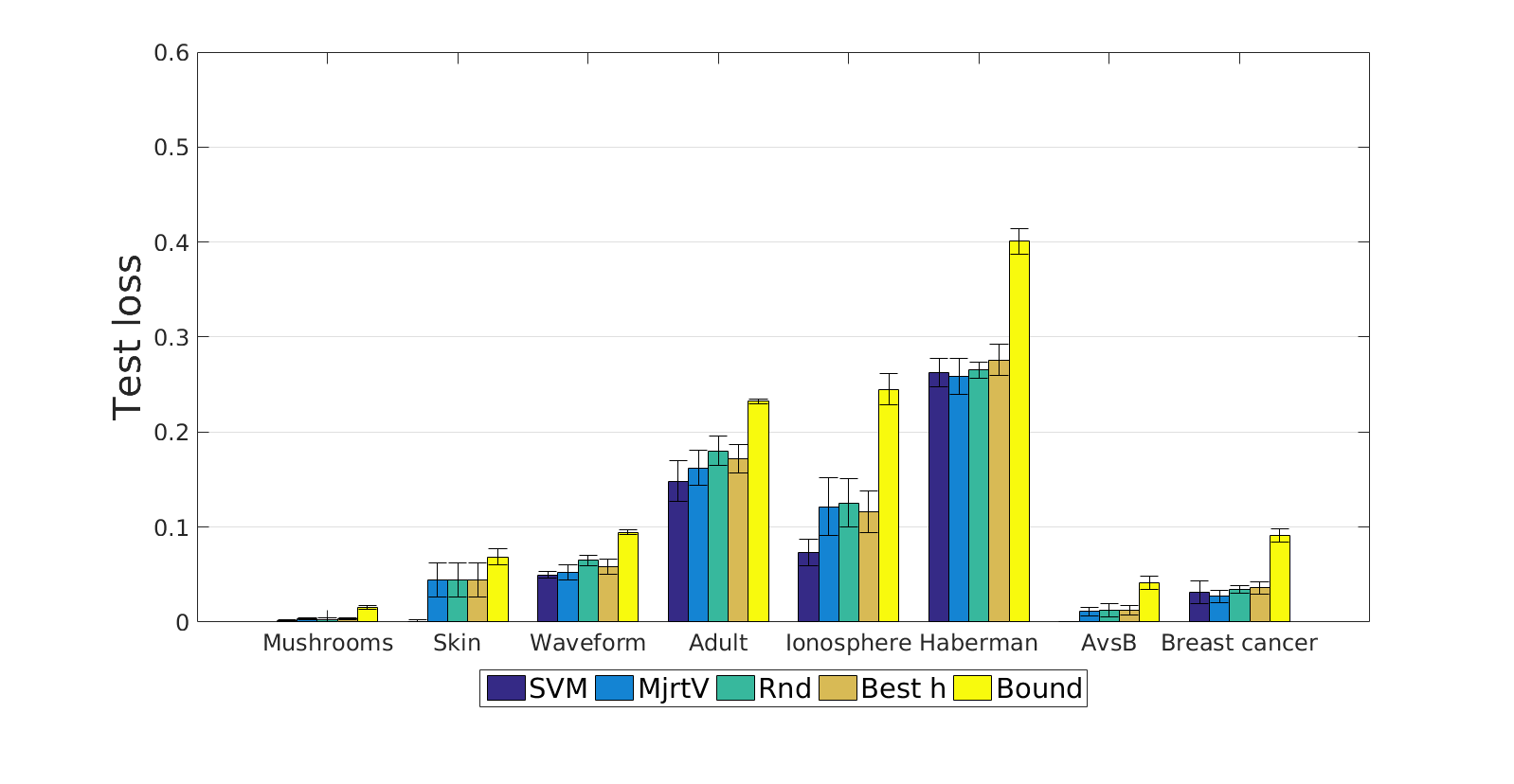}
\caption{\textbf{Comparison of cross-validated SVM (SVM), $\rho$-weighted majority vote (MjrtV), randomized classifier $\rho$ (Rnd), the best (empirically) out of $m$ weak classifiers (Best $h$), and the PAC-Bayesian bound (Bound).} The comparison is for the maximal value of $m$ ($m=n$) and the same values of $r$ as given in the main experiments in Figures~\ref{fig:runtime_kernel} and \ref{fig:appruntime}. Error bars correspond to one standard deviation over 5 splits of the data into training, validation, and test set.}
\label{fig:mjv-vs-rnd}
\end{figure*}

\end{document}